\documentclass[11pt,letterpaper]{article}

\usepackage[margin=1in]{geometry}
\usepackage{amsmath,amssymb,amsthm}
\usepackage{graphicx}
\usepackage{booktabs}
\usepackage{natbib}
\usepackage{hyperref}
\usepackage{cleveref}
\usepackage{algorithm}
\usepackage{algpseudocode}
\usepackage{xcolor}
\usepackage{enumitem}
\usepackage{subcaption}
\usepackage{tikz}
\usetikzlibrary{arrows.meta,positioning,shapes.geometric,calc,fit,backgrounds,decorations.pathreplacing}

\newtheorem{definition}{Definition}[section]
\newtheorem{theorem}{Theorem}[section]

\newtheorem{proposition}[theorem]{Proposition}

\newtheorem{example}{Example}[section]

\newcommand{\SGH}{\textsc{Graph Harness}}
\newcommand{\ExecSystem}{\mathcal{E}}
\renewcommand{\State}{\mathcal{S}}
\newcommand{\Units}{\mathcal{U}}
\newcommand{\Policy}{\mathcal{P}}
\newcommand{\Obs}{\mathcal{O}}
\newcommand{\Update}{\Delta}
\newcommand{\ready}{\mathit{ready}}
\newcommand{\ExecPlan}{\Pi}

\title{From Agent Loops to Structured Graphs:\\A Scheduler-Theoretic Framework for LLM Agent Execution\\[1em]}

\author{
  \textbf{Hu Wei}\\[0.5em]
  \texttt{1990huwei@sina.com}\\[1em]
}

\date{}

\begin{document}
\maketitle

\begin{abstract}
The dominant paradigm for building LLM-based agents is the \emph{Agent Loop}---an iterative cycle where a single language model decides what to do next by reading an ever-growing context window. This paradigm has three structural weaknesses: implicit dependencies between steps, unbounded recovery loops that may retry indefinitely, and mutable execution history that makes debugging difficult. We characterize the Agent Loop as a \emph{single-ready-unit scheduler}: at any instant, at most one executable unit is active, and the choice of which unit to activate is the output of an opaque LLM inference rather than an inspectable policy. This characterization lets us place Agent Loops and graph-based execution engines on a single semantic continuum.

We propose \SGH{} (Structured Graph Harness), which lifts the control structure from implicit context into an explicit static DAG. \SGH{} makes three design commitments: an execution plan is immutable for the duration of a plan version; planning, execution, and recovery are separated into three independent layers; and recovery follows a strict escalation protocol. These commitments trade some expressiveness for controllability, verifiability, and implementability.

Our contributions are fourfold: a scheduler-unified framework that applies classical scheduling theory to LLM agent execution, identifying the specific challenges introduced by non-deterministic LLM nodes; a trade-off analysis of controllability, expressiveness, and implementability across 70 surveyed systems; a formal specification including a node state machine with proven termination and soundness guarantees; and an attributable experimental framework with a seven-group design for future empirical validation.

This is a position paper and design proposal. We contribute a theoretical framework, a design analysis, and an experimental protocol---not a production implementation or empirical results. The design has been verified for internal consistency and state-machine completeness; engineering details and experimental validation are left to future work.

\end{abstract}

\begin{table}[t]
\centering
\caption{Summary of notation used throughout this paper.}
\label{tab:notation}
\begin{tabular}{@{}ll@{}}
\toprule
\textbf{Symbol} & \textbf{Definition} \\
\midrule
$\ExecSystem = (\State, \Units, \Policy, \Obs, \Update)$ & Execution system tuple (Def.~\ref{def:execution-system}) \\
$\Units(s) \subseteq \mathcal{A}$ & Ready set at state $s$ \\
$|\Units(s)|$ & Ready-set cardinality \\
$\ExecPlan = (\mathit{id},\mathit{version},V,E,\sigma,\kappa)$ & Execution plan (Def.~\ref{def:execution-plan}) \\
$\sigma : V \to \text{NodeConfig}$ & Node configuration mapping \\
$\kappa$ & Plan-level output contract \\
$\kappa_v$ & Per-node output contract for node $v$ \\
$\Sigma, \Sigma_{\mathit{term}}$ & Node state set / terminal states (Def.~\ref{def:node-state}) \\
$\mathcal{R}$ & Recovery action set (Def.~\ref{def:recovery-action}) \\
$\mathcal{C}_{\mathit{exec}}, \mathcal{C}_{\mathit{diag}}$ & Execution / diagnostic context (Def.~\ref{def:context-partition}) \\
\bottomrule
\end{tabular}
\end{table}

\section{Introduction}
\label{sec:introduction}

Large language models (LLMs) have enabled a new class of software systems---\emph{LLM agents}---that autonomously decompose tasks, invoke tools, and iterate on solutions. The dominant paradigm for building these agents is the \emph{Agent Loop}: an iterative cycle of reasoning, acting, and observing where a single LLM decides what to do next by reading an ever-growing context window. Despite its simplicity and widespread adoption, this paradigm has three structural weaknesses that become increasingly apparent as tasks grow in complexity.

First, \textbf{dependencies between steps are implicit and unverifiable.} When an agent must ``modify code, then run tests,'' the fact that the second step depends on the first exists only in the context window. The LLM must \emph{remember} this dependency at inference time; there is no structural guard that prevents out-of-order execution. Second, \textbf{failure recovery has no bounded semantics.} When a step fails, the LLM autonomously decides whether to retry, skip, or replan---with no explicit contract specifying which recovery actions are available for which failure types, and no bound on how many attempts may be made. Third, \textbf{the execution plan can be silently rewritten.} If the LLM revises its plan mid-execution, the original plan is overwritten in the context. After execution, it is impossible to reconstruct a faithful audit trail of which plan governed which actions.

These weaknesses are not edge cases. Our analysis of 70 open-source LLM agent projects reveals that 60\% (42 out of 70) adopt the Agent Loop pattern. The survey methodology and detailed results are provided in Appendix~\ref{sec:appendix-survey}. Recent enhancements---planner-augmented loops, graph-structured orchestration, multi-agent decomposition---improve specific aspects but do not fundamentally address the structural problem: the control flow remains implicit, and the execution lacks a stable commitment.

We observe that the Agent Loop is, at its core, a \emph{single-ready-unit scheduler}: at any point during execution, at most one executable unit (tool invocation, sub-task, or reasoning step) is active, and the choice of the next unit is the output of an opaque LLM inference. This reframing lets us place Agent Loops and graph-based execution engines on a single semantic continuum, making their structural differences precise and comparable. The key parameter is the \emph{ready-set cardinality}---how many units are simultaneously eligible for dispatch---and the \emph{policy explicitness}---how deterministic and inspectable the scheduling decision is.

By contrast, graph-based executors can dispatch multiple nodes simultaneously, enabling **parallel execution** (running independent operations concurrently) and **alternative paths** (trying multiple approaches and selecting one that succeeds). We distinguish two forms of parallelism: **constructive parallelism**, where all branches must complete (e.g., reading two files in parallel), and **competitive parallelism**, where only one branch is needed and the rest can be cancelled (e.g., trying two different fixes and stopping after one succeeds). SGH supports constructive parallelism but excludes competitive parallelism for controllability reasons (see Section 6.3).

\paragraph{Relationship to classical scheduling theory.} The formalization of execution systems as tuples $(\State, \Units, \Policy, \Obs, \Update)$ builds on classical DAG scheduling literature~\citep{topcuoglu2002list, cormen2009introduction}. What is novel is not the tuple representation itself, but its application to LLM agent execution and the identification of **LLM-specific challenges** (non-deterministic node output, semantic validation, reasoning errors) that classical schedulers do not address. We discuss these differences in detail in Section 6.5.

Building on this observation, we propose \emph{Graph Harness} (\SGH{}), a structured execution design that lifts the control structure from implicit context into an explicit static directed acyclic graph (DAG). \SGH{} makes three design commitments: (1)~an execution plan is an immutable commitment for the duration of a plan version; (2)~planning, execution, and recovery are separated into three independent layers with well-defined interfaces; and (3)~recovery actions follow a strict escalation protocol that prevents unbounded replanning. These commitments deliberately trade some expressiveness---competitive parallelism, recursive sub-graph expansion, and parent-chain rollback are excluded---for controllability, verifiability, and implementability.

We make the following theoretical and design contributions:
\begin{itemize}
\item A \textbf{scheduler-theoretic framework} that characterizes LLM agent execution systems as schedulers parameterized by ready-set cardinality ($|\Units|$) and policy explicitness. This framework enables precise comparison of Agent Loops, graph-based executors, and intermediate variants, generating testable hypotheses about performance gains along the gradient ($G_{\mathit{plan}} > 0$, $G_{\mathit{scaffold}} > 0$, etc.).
\item A \textbf{four-principle design methodology} derived from analysis of 70 surveyed systems, explicitly characterizing the trade-offs between controllability, expressiveness, and implementability. Each principle is justified by qualitative observations from the survey (e.g., failure-loop behavior was commonly observed among the graph/flow orchestration systems in our dataset, while such behavior was rare in state-machine-based systems).
\item A \textbf{formal execution model} with three-layer separation (planning, execution, recovery), an immutable plan versioning scheme, and a node state machine with theoretically proven termination and soundness guarantees under explicit fairness assumptions.
\item A \textbf{seven-group experimental protocol} that isolates the contribution of each design decision ($G_{\mathit{plan}}$, $G_{\mathit{scaffold}}$, $G_{\mathit{graph}}$, $G_{\mathit{patch}}$, $G_{\mathit{replan}}$). This protocol provides a rigorous methodology for future empirical work; the execution of the experiments themselves is outside the scope of this paper.
\end{itemize}

\medskip
\noindent\textit{Scope.} This paper contributes a theoretical framework, a design analysis, and an experimental protocol---not a production implementation or empirical results. The design has been verified for internal consistency and state-machine completeness; engineering details (concurrent scheduling, distributed logging, fault-tolerant persistence) and experimental validation are the subject of ongoing work.

\paragraph{A motivating example.} Consider a task that requires fixing a bug in a Python project: search two modules in parallel, read both, analyze the root cause, try either of two alternative patches, run tests, update documentation, and generate a report (\Cref{fig:running-example-dag}). In an Agent Loop, all steps execute serially (11~turns), recovery from a failed patch is ad-hoc, and there is no structural guard ensuring that the analysis waits for both files to be read. In \SGH{}, the same task takes 6~scheduling rounds: the two searches and the two reads dispatch in parallel ($|\Units|=2$), the two patches and the documentation update proceed concurrently ($|\Units|=3$), and the any\_of join selects whichever patch succeeds first while skipping the alternative. The parallelism, dependency tracking, and bounded recovery are not LLM decisions---they are structural properties of the DAG.

\begin{figure}[t]
\centering
\begin{tikzpicture}[
  node distance=0.9cm and 1.3cm,
  box/.style={draw, rounded corners, minimum width=1.5cm, minimum height=0.6cm, align=center, font=\scriptsize},
  arr/.style={-{Stealth[length=2mm]}, semithick}
]
  \node[box, fill=blue!10] (sa) {\texttt{search\_auth}};
  \node[box, fill=blue!10, right=of sa] (su) {\texttt{search\_utils}};
  \node[box, fill=blue!10, below=of sa] (ra) {\texttt{read\_auth}};
  \node[box, fill=blue!10, below=of su] (ru) {\texttt{read\_utils}};
  \node[box, fill=orange!15, below right=0.9cm and -0.3cm of ra] (az) {\texttt{analyze}};
  \node[box, fill=green!10, below left=0.9cm and 0.3cm of az] (fa) {\texttt{fix\_A}};
  \node[box, fill=green!10, below=0.9cm of az] (fb) {\texttt{fix\_B}};
  \node[box, fill=yellow!15, below right=0.9cm and 0.3cm of az] (ud) {\texttt{update\_docs}};
  \node[box, fill=green!10, below=0.9cm of fb] (rt) {\texttt{run\_tests}};
  \node[box, fill=orange!15, below right=0.9cm and -0.3cm of rt] (rp) {\texttt{report}};
  \draw[arr] (sa) -- (ra);
  \draw[arr] (su) -- (ru);
  \draw[arr] (ra) -- (az);
  \draw[arr] (ru) -- (az);
  \draw[arr] (az) -- (fa);
  \draw[arr] (az) -- (fb);
  \draw[arr] (az) -- (ud);
  \draw[arr] (fa) -- node[left, font=\tiny] {any\_of} (rt);
  \draw[arr] (fb) -- node[right, font=\tiny] {any\_of} (rt);
  \draw[arr] (rt) -- (rp);
  \draw[arr] (ud) -- (rp);
  \draw[blue!50, dashed, rounded corners=4pt]
    ([shift={(-0.3cm, 0.35cm)}]sa.north west) rectangle ([shift={(0.3cm, -0.5cm)}]su.south east);
  \node[above=0.2cm of sa.north, anchor=south, font=\tiny, blue!50!black] {eligible for parallel ($|\Units|=2$)};
  \draw[green!50, dashed, rounded corners=4pt]
    ([shift={(-0.3cm, 0.35cm)}]fa.north west) rectangle ([shift={(0.3cm, -0.7cm)}]ud.north east);
  \node[above=0.2cm of fa.north, anchor=south, font=\tiny, green!50!black] {eligible for parallel ($|\Units|=3$)};
\end{tikzpicture}
\caption{Motivating example: a bug-fix task as a DAG. Blue nodes form the first parallel wave; green nodes form the second. The any\_of join selects one patch; the other is skipped. \textit{Note:} This example illustrates SGH's potential benefits under ideal conditions: (1)~the planner correctly identifies all parallel structure, (2)~dependencies are fully known at planning time, and (3)~the LLM executes each node correctly. Real-world tasks may not exhibit such clean parallelism, and the planner's ability to recognize independent operations is a prerequisite for these benefits. See \Cref{sec:planning-failures} for discussion of what happens when the planner fails.}
\label{fig:running-example-dag}
\end{figure}
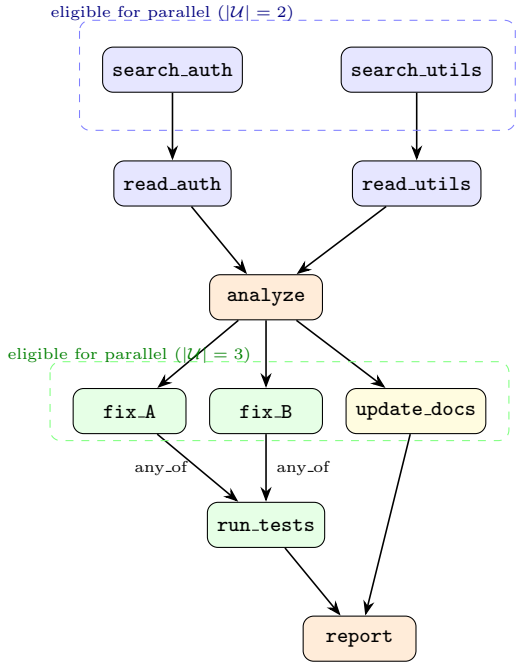

\section{Related Work}
\label{sec:related-work}

\subsection{Agent Loop Paradigm}

The iterative observe--reason--act loop originates in the ReAct framework~\citep{yao2023react} and underlies the majority of production agent systems. Chain-of-Thought prompting~\citep{wei2022cot} showed that structured reasoning traces improve task performance, while Plan-and-Solve~\citep{wang2023plan} extended this with explicit plan-then-execute decomposition. Our survey of 70 open-source projects confirms that approximately 60\% use some variant of the iterative tool loop.

From the scheduler-theoretic perspective we develop in \Cref{sec:scheduler-framework}, all Agent Loop systems share the same structural limitation: $|\Units| = 1$ at every execution step. The scheduling policy is the LLM itself---a black box that is neither deterministic nor interpretable. Our framework makes this limitation precise and comparable to alternatives by parameterizing execution systems along the ready-set cardinality and policy explicitness axes.

\paragraph{\SGH{}'s novel contributions.} Our work differs from the above approaches in three key respects. First, we provide a \emph{scheduler-theoretic framework} that makes the trade-off between expressiveness and controllability \emph{formal}---not just an engineering observation but a consequence of the ready-set cardinality $|\Units|$ and policy explicitness. Second, we deliberately restrict expressiveness (no \texttt{first\_of}, no recursive expansion, no dynamic topology) to maximize controllability and verifiability, a design point that existing graph orchestration systems do not target. Third, we systematically address \emph{LLM-specific challenges} (non-deterministic output, reasoning failures as primary error mode, non-idempotent retry) that classical DAG schedulers do not face. These contributions position \SGH{} as a design point that combines the formal rigor of classical scheduling theory with the practical realities of LLM-powered execution.

\subsection{Plan-Then-Execute and Separated Architectures}

A growing line of work separates planning from execution. Plan-and-Act~\citep{erdogan2025planandact} trains a dedicated Planner model to generate structured high-level plans and an Executor to translate them into actions, demonstrating state-of-the-art performance on web navigation benchmarks. Routine~\citep{mo2025routine} introduces structured planning scripts as an intermediate representation between LLM-generated plans and execution engines, improving multi-step tool-calling accuracy from 41\% to 96\% in enterprise settings. Task-Decoupled Planning (TDP)~\citep{wang2025tdp} decomposes tasks into a DAG of sub-goals with scoped contexts, confining replanning to the active sub-task and reducing token consumption by up to 82\%.

These systems validate the principle that separating planning from execution improves reliability. However, in our framework, they remain single-ready-unit schedulers ($|\Units| = 1$): the plan guides execution but does not create multiple simultaneously dispatchable units. \SGH{} goes further by making the plan itself a scheduling structure with a multi-element ready set.

\paragraph{TDP in detail.} Task-Decomposed Planning~\citep{wang2025tdp} is the closest existing system to \SGH{} and merits detailed comparison. TDP decomposes tasks into a DAG of sub-goals, each with a scoped context that isolates the sub-goal's execution history from the global context. The authors report up to 82\% reduction in token consumption, validating the principle that context isolation improves efficiency.

In our scheduler framework, TDP is classified as a single-ready-unit scheduler ($|\Units| = 1$) based on its published execution model: each sub-task is processed by a dedicated Planner--Executor pair, one at a time, with sequential sub-task scheduling. The DAG structure guides the \emph{order} of execution but does not create multiple simultaneously dispatchable units. TDP's DAG is also \emph{mutable} during execution---the planner can add, remove, or reorder sub-tasks---which conflicts with \SGH{}'s plan-version immutability.

The three key differences are: (1)~\textbf{Ready-set cardinality}: TDP executes one sub-task at a time; \SGH{} dispatches all nodes with satisfied dependencies concurrently. (2)~\textbf{Plan stability}: TDP allows dynamic sub-task graph modification during execution; \SGH{} enforces plan-version immutability---any structural change requires a new plan version via the replan protocol. (3)~\textbf{Recovery}: TDP replans at the sub-task level without a formal escalation protocol; \SGH{} requires exhausting lower-level recovery (retry, then local patch) before permitting full replan, preventing the ``failure loop'' pathology where an LLM repeatedly replans without making progress.

\subsection{LLM-Based Workflow Optimization}

Beyond graph orchestration, a growing body of work focuses on LLM-driven workflow optimization and multi-agent collaboration. \textbf{AutoGen}~\citep{wu2023autogen} introduces conversational agents with role-based message routing, enabling complex multi-agent interactions through structured conversation patterns. While it supports sophisticated agent coordination, it lacks explicit DAG scheduling and structured recovery protocols. \textbf{CrewAI}~\citep{joao2024crewai} implements hierarchical task assignment with role-based agent definitions, providing a clean abstraction for multi-agent workflows. Similar to AutoGen, its recovery mechanism is ad-hoc rather than protocol-driven. \textbf{Semantic Kernel}~\citep{microsoft2023semantic} introduces a planner-executor pattern with composable skills, offering a principled approach to skill reuse. Closest to \SGH{} in principle, Semantic Kernel focuses on skill composition rather than the broader scheduling theory developed here.

These systems represent alternative approaches to the same problem space and demonstrate the value of structured agent coordination. However, in our scheduler framework, they remain primarily single-ready-unit schedulers ($|\Units| = 1$) with implicit or semi-deterministic policies. \SGH{} distinguishes itself by: (1) making the scheduling policy fully explicit and deterministic; (2) enforcing plan-version immutability; and (3) formalizing recovery through a three-level escalation protocol.

\subsection{Graph-Based Agent Orchestration}

Several recent systems model agent workflows as graphs. GPTSwarm~\citep{zhuge2024gptswarm} represents agents as computational graphs with optimizable edges, enabling automatic improvement of agent orchestration. AgentKit~\citep{wu2024agentkit} provides a graph-based prompt composition framework for building complex agent workflows. AFlow~\citep{li2025aflow} models agentic workflows as graphs where nodes represent LLM invocations and edges capture logical dependencies, using Monte Carlo Tree Search for workflow optimization. AGORA~\citep{zhang2025agora} unifies language agent algorithms with a graph-based orchestration engine, demonstrating that simpler methods like Chain-of-Thought often exhibit robust performance with significantly lower computational overhead than sophisticated graph approaches.

\paragraph{LangGraph.} LangGraph~\citep{langgraph} is currently the most widely adopted graph-based agent framework. It supports parallel node execution (via fan-out/fan-in patterns), conditional edges, state channels, and runtime graph modification through \texttt{add\_node}/\texttt{add\_edge} calls during execution. In our scheduler framework, LangGraph is a \emph{multi-ready-unit scheduler} with a semi-deterministic $\Policy$: the graph topology constrains which nodes may run in parallel, but the conditional-edge routing is determined by LLM output at runtime, making the effective policy non-deterministic.

The key differences between LangGraph and \SGH{} are philosophical rather than structural:

\begin{table}[t]
\centering
\caption{\SGH{} vs.\ LangGraph.}
\label{tab:vs-langgraph}
\begin{tabular}{@{}lp{3.2cm}p{3.2cm}@{}}
\toprule
\textbf{Dimension} & \textbf{LangGraph} & \textbf{\SGH{}} \\
\midrule
Plan mutability & Mutable at runtime (add/remove nodes and edges) & Immutable per plan version \\
Recovery & Retry + conditional routing & Three-level escalation protocol \\
Routing policy & LLM-driven conditional edges & Deterministic topology-based \\
Failure attribution & State-channel snapshots & Plan-version-audited traces \\
Expressiveness target & Maximum flexibility & Maximum controllability \\
Production maturity & Widely deployed; community-validated & Design only; no empirical validation \\
\bottomrule
\end{tabular}
\end{table}

\noindent\textit{Fairness disclaimer.} LangGraph is a mature, production-grade framework validated across thousands of deployments. \SGH{} is an unimplemented design. The comparison below is structural---intended to clarify design trade-offs---not evaluative. We do not claim that \SGH{} outperforms LangGraph; we claim that it occupies a different point in the design space.

LangGraph prioritizes \emph{flexibility}: developers can dynamically restructure the graph mid-execution, route based on LLM judgment, and combine arbitrary patterns. \SGH{} prioritizes \emph{controllability}: the plan-version immutability and escalation protocol trade away runtime flexibility for verifiable execution traces and bounded failure handling. These are complementary design points---LangGraph is the better tool for exploratory tasks where the workflow structure emerges during execution; \SGH{} is a candidate for engineering tasks where the dependency structure can be articulated upfront and verifiable execution matters. Whether this candidate delivers its promised benefits in practice remains an open empirical question.

A comprehensive survey by \citet{zhang2025gla} categorizes graph-augmented LLM agents by function---planning, memory, tool usage, and multi-agent coordination---and identifies key open challenges including structural adaptability and unified graph abstractions. \citet{yue2025survey} introduce Agentic Computation Graphs (ACGs) as a unifying abstraction, distinguishing static templates from dynamic runtime graphs and execution traces.

Our work differs from the above approaches in two key respects. First, we provide a \emph{scheduler-theoretic framework} that makes the trade-off between expressiveness and controllability \emph{formal}---not just an engineering observation but a consequence of the ready-set cardinality $|\Units|$. Second, we deliberately restrict expressiveness (no \texttt{first\_of}, no recursive expansion, no dynamic topology) to maximize controllability and verifiability, a design point that existing graph orchestration systems do not target.

\subsection{Multi-Agent Task Decomposition and Scheduling}

In multi-agent settings, task decomposition and scheduling have received significant attention. AOP~\citep{sun2025aop} proposes agent-oriented planning with three design principles---solvability, completeness, and non-redundancy---for multi-agent task allocation. DynTaskMAS~\citep{li2025dyntaskmas} introduces dynamic task graphs with asynchronous parallel execution, achieving 21--33\% reduction in execution time. Graph-of-Agents~\citep{liu2025graphofagents} models multi-agent communication as a directed graph with structured message passing, outperforming baselines using only 3 selected agents. G-Designer~\citep{zhang2025gdesigner} dynamically designs task-aware communication topologies using variational graph auto-encoders. WorfBench~\citep{qiao2025worfbench} benchmarks workflow generation, revealing a 15\% gap between sequence planning and graph planning capabilities even in GPT-4.

These systems demonstrate the power of graph-structured coordination but typically treat the graph topology as flexible and dynamically adjustable. \SGH{} takes the opposite stance: we argue that \emph{fixing} the topology for the duration of a plan version provides controllability guarantees that dynamic systems cannot offer, at the cost of excluding tasks that require runtime structural adaptation.

\subsection{Structured Execution for Scientific Agents}

El Agente Gr{\'a}fico~\citep{chen2025elagentegrafico} embeds LLM decision-making within type-safe execution graphs and knowledge graphs for scientific workflows, demonstrating that a single agent with a reliable execution engine can robustly perform complex multi-step computations. This validates our thesis that structured execution infrastructure matters more than agent count for complex tasks.

\subsection{Classical DAG Scheduling}

The formal study of DAG scheduling has a long history in parallel and distributed computing~\citep{topcuoglu2002list, cormen2009introduction}. List scheduling algorithms, topological ordering, and critical-path analysis provide the theoretical foundations upon which \SGH{} builds. Our contribution is not a new DAG scheduler but the identification that the same formal tools can be applied to analyze and improve LLM agent execution. While the applicability of DAG scheduling to workflow management may appear obvious in hindsight, the specific challenges introduced by non-deterministic LLM nodes have not been systematically addressed. \Cref{tab:classical-vs-llm} makes this gap explicit.

\begin{table}[t]
\centering
\caption{Classical DAG scheduling vs.\ LLM agent scheduling: why the analogy is not trivial.}
\label{tab:classical-vs-llm}
\begin{tabular}{@{}lp{3.5cm}p{3.5cm}@{}}
\toprule
\textbf{Classical concept} & \textbf{LLM agent challenge} & \textbf{\SGH{} mechanism} \\
\midrule
Deterministic node output & Same input may yield different outputs; hallucination & Contract validation + validation gap analysis \\
Infrastructure failures & Reasoning errors, hallucinations, misinterpretation & Three-level recovery distinguishing error types \\
Scheduling = optimization & Scheduling involves side-effect safety & Side-effect classification (Principle~4) \\
Context = task parameters & Context includes reasoning history that may corrupt subsequent steps & Execution/diagnostic context separation \\
Retry = idempotent & Retry may produce different results; non-idempotent side effects & Bounded retry + side-effect-aware scheduling \\
\bottomrule
\end{tabular}
\end{table}

Classical workflow engines (Airflow, Luigi, Prefect) already apply DAG scheduling to task orchestration, as we discuss in \Cref{sec:workflow-engines}. Our contribution relative to both classical scheduling and classical workflow engines is the identification and systematic treatment of the five challenges in \Cref{tab:classical-vs-llm} that arise specifically when nodes are powered by non-deterministic LLM inference.

\paragraph{Theoretical connections.} The topological scheduling policy used by \SGH{} (dispatching all ready nodes in topological order) is equivalent to \textit{list scheduling} with zero communication costs, a classical algorithm that achieves a $2 - 1/m$ approximation ratio for $m$ identical processors~\citep{graham1969bounds}. This connection suggests that, for tasks where LLM execution time is approximately constant, \SGH{}'s default policy provides provable performance guarantees. For LLM nodes with large context windows and variable execution times, results on DAG scheduling with communication costs~\citep{sinnen2007new} may inform optimal policy design. Online scheduling with unknown job durations is also relevant, as LLM execution time is highly variable and difficult to predict in advance. These classical results provide a rich theoretical foundation for future extensions of \SGH{}'s scheduling policy.

\subsection{Workflow Engines: Airflow, Luigi, and Prefect}
\label{sec:workflow-engines}

Classic DAG-based workflow engines---Apache Airflow, Luigi, Prefect, and their descendants---share several surface-level features with \SGH{}: static DAG definitions, topological scheduling, and retry mechanisms. A natural question is: what is genuinely new?

\begin{table}[t]
\centering
\caption{\SGH{} vs.\ classical workflow engines.}
\label{tab:vs-workflow}
\begin{tabular}{@{}lp{3.5cm}p{3.5cm}@{}}
\toprule
\textbf{Dimension} & \textbf{Airflow / Luigi / Prefect} & \textbf{\SGH{}} \\
\midrule
Node behavior & Deterministic (Python code) & Non-deterministic (LLM inference) \\
Node output & Predictable, type-checked & Unpredictable, contract-validated \\
Failure causes & Primarily infrastructural & Primarily reasoning errors, hallucinations \\
Recovery & Manual / simple retry & Three-level escalation protocol \\
Context management & Task-scoped variables & Execution/diagnostic context separation \\
Plan evolution & Redeploy new DAG version & Plan-version immutability within execution \\
DAG dynamics & Dynamic at parse-time (Python-generated) & Static per plan-version \\
\bottomrule
\end{tabular}
\end{table}

The fundamental difference is that \SGH{}'s nodes are \emph{non-deterministic}: the same node with the same inputs may produce different outputs on different invocations, because the computation is performed by an LLM. This non-determinism creates challenges that classical workflow engines do not face: output contracts cannot be checked at compile time, failure modes include hallucination and misinterpretation (not just crashes), and the recovery strategy must account for the possibility that the LLM's reasoning---not the infrastructure---is the source of the error.

\SGH{} borrows the formal structure of DAG scheduling (topological ordering, dependency tracking, ready-set computation) but adds three mechanisms specifically designed for non-deterministic nodes: (1)~contract-based output validation with explicit pass/fail semantics; (2)~a three-level recovery protocol that distinguishes transient errors from reasoning failures; and (3)~execution/diagnostic context separation that prevents failure history from corrupting subsequent reasoning. These additions have no analogue in classical workflow engines and constitute the core of \SGH{}'s contribution.

\subsection{Systematic Comparison}
\label{sec:systematic-comparison}

To clarify \SGH{}'s positioning relative to existing work, \Cref{tab:systematic-comparison} provides a systematic comparison along seven key dimensions. This table makes explicit the design trade-offs that distinguish \SGH{} from related systems.

\begin{table}[t]
\centering
\caption{Systematic comparison of \SGH{} with related LLM agent and workflow systems.}
\label{tab:systematic-comparison}
\tiny
\begin{tabular}{@{}lccccccc@{}}
\toprule
\textbf{System} & \textbf{Multi-} & \textbf{Det.} & \textbf{Bounded} & \textbf{Immut.} & \textbf{Recovery} & \textbf{Contract} & \textbf{Target} \\
& \textbf{Ready} & \textbf{Policy} & \textbf{Recovery} & \textbf{Plan} & \textbf{Protocol} & \textbf{Valid.} & \textbf{Domain} \\
\midrule
Agent Loop~\citep{yao2023react} & No & No & No & No & Ad-hoc LLM & None & General \\
AutoGPT & No & No & No & No & Ad-hoc LLM & None & General \\
CrewAI & No & No & No & No & Ad-hoc LLM & None & General \\
Plan-and-Act~\citep{erdogan2025planandact} & No & No & No & Partial & Re-plan & Optional & Web nav \\
TDP~\citep{wang2025tdp} & No & No & No & Mutable & Node-local & Optional & General \\
LangGraph~\citep{langgraph} & \textbf{Yes} & \textbf{Semi} & \textbf{No} & \textbf{Mut.} & \textbf{Retry} & \textbf{Opt.} & \textbf{Gen.} \\
GPTSwarm~\citep{zhuge2024gptswarm} & \textbf{Yes} & \textbf{No} & \textbf{No} & \textbf{Mut.} & \textbf{Edge recon.} & \textbf{Opt.} & \textbf{Gen.} \\
DynTaskMAS~\citep{li2025dyntaskmas} & \textbf{Yes} & \textbf{No} & \textbf{No} & \textbf{Dyn.} & \textbf{Dyn. resch.} & \textbf{Opt.} & \textbf{Gen.} \\
Airflow & \textbf{Yes} & \textbf{Yes} & \textbf{Yes} & \textbf{Yes} & \textbf{Manual} & N/A & Workflow \\
Luigi & \textbf{Yes} & \textbf{Yes} & \textbf{Yes} & \textbf{Yes} & \textbf{Manual} & N/A & Workflow \\
Prefect & \textbf{Yes} & \textbf{Yes} & \textbf{Yes} & \textbf{Yes} & \textbf{Manual} & N/A & Workflow \\
\midrule
\textbf{\SGH{}} & \textbf{Yes} & \textbf{Yes} & \textbf{Yes} & \textbf{Yes} & \textbf{Escalation} & \textbf{Req.} & \textbf{Eng.} \\
\bottomrule
\end{tabular}
\end{table}

\paragraph{Key observations.} \Cref{tab:systematic-comparison} reveals three patterns. First, \SGH{} is the \emph{only} LLM agent system that simultaneously enforces all four core constraints: multi-ready-unit scheduling, deterministic policy, bounded recovery, and immutable plan versions. LangGraph achieves multi-ready-unit scheduling but lacks the other three. TDP achieves plan-structure but remains single-ready-unit. Second, classical workflow engines (Airflow, Luigi, Prefect) meet these constraints, but their nodes are deterministic (not LLM-powered) and they lack contract validation and LLM-specific recovery. Third, \SGH{} is explicitly positioned for \emph{engineering tasks} with verifiable outcomes, whereas most LLM agent systems target \emph{general-purpose} tasks with open-ended exploration.

\paragraph{LLM-specific challenges addressed.} Classical workflow engines do not face three challenges that \SGH{} explicitly addresses: (1)~\textbf{Non-deterministic node output}: the same LLM node may produce different outputs on different invocations, requiring contract validation rather than type-checking. (2)~\textbf{Reasoning failures as primary error mode}: LLM nodes fail due to hallucinations or misinterpretation, not just infrastructure crashes. (3)~\textbf{Non-idempotent retry}: retrying an LLM call may produce different results, requiring side-effect classification and bounded budgets. \Cref{tab:classical-vs-llm} (Section 8.8) makes this gap explicit.

\subsection{Positioning}

\begin{table}[t]
\centering
\caption{Positioning \SGH{} relative to existing approaches.}
\label{tab:positioning}
\begin{tabular}{@{}lp{2.5cm}p{2.5cm}p{2cm}p{2cm}@{}}
\toprule
\textbf{Approach} & \textbf{Execution commitment} & \textbf{Recovery} & \textbf{Scheduler type} & \textbf{LLM-aware?} \\
\midrule
Agent Loop~\citep{yao2023react} & None & Unbounded LLM decision & Single-ready-unit & Implicit \\
Plan-and-Act~\citep{erdogan2025planandact} & Plan as prompt & Re-plan & Single-ready-unit & Partial \\
TDP~\citep{wang2025tdp} & Mutable DAG sub-tasks & Node-local replan & Single-ready-unit & Partial \\
LangGraph~\citep{langgraph} & Mutable runtime graph & Retry + conditional routing & Multi-ready-unit & Partial \\
GPTSwarm~\citep{zhuge2024gptswarm} & Optimizable graph & Edge reconnection & Multi-ready-unit & No \\
DynTaskMAS~\citep{li2025dyntaskmas} & Dynamic DAG & Dynamic reschedule & Multi-ready-unit & No \\
Airflow & Immutable DAG version & Manual / simple retry & Multi-ready-unit & No \\
Luigi & Immutable DAG version & Manual / simple retry & Multi-ready-unit & No \\
Prefect & Immutable DAG version & Manual / simple retry & Multi-ready-unit & No \\
\midrule
\textbf{\SGH{}} & \textbf{Immutable plan version} & \textbf{Escalation protocol} & \textbf{Multi-ready-unit} & \textbf{Yes (full)} \\
\bottomrule
\end{tabular}
\end{table}

\SGH{} is a design that, unlike prior work, simultaneously enforces three properties as mandatory constraints: (1)~multi-ready-unit scheduling with a deterministic policy, (2)~immutable plan versions as execution commitments, and (3)~a bounded recovery protocol with escalation invariants. LangGraph achieves multi-ready-unit scheduling but does not guarantee plan immutability or bounded recovery. TDP achieves DAG-structured decomposition with scoped contexts but remains single-ready-unit and lacks escalation invariants. Each of \SGH{}'s three properties exists in isolation in prior work; their combination and the identification of the design tensions that arise from enforcing them jointly are the contributions.

\section{A Scheduler-Unified Framework}
\label{sec:scheduler-framework}

This section develops a formal framework that places Agent Loops and graph-based executors on the same semantic footing. The key insight is that both systems can be modeled as \emph{schedulers} that differ along two dimensions: the cardinality of the ready set and the explicitness of the scheduling policy.

\subsection{Execution Systems}

\begin{definition}[Execution System]
\label{def:execution-system}
An \emph{execution system} is a tuple
\[
  \ExecSystem = (\State, \Units, \Policy, \Obs, \Update)
\]
where
\begin{itemize}[leftmargin=*]
  \item $\State = \{(v, s_v) \mid v \in V, s_v \in \Sigma\}$ is the set of node states, where $V$ is the set of nodes (executable units) and $\Sigma$ is the node state set (see Definition~\ref{def:node-state});
  \item $\Units : \State \to 2^V$ maps a global state to the ready set of nodes eligible for dispatch:
        \[ \Units(\State) = \{v \in V \mid s_v = \mathit{ready} \land \forall (u, v) \in E: s_u = \mathit{executed}\} \]
        where $E$ is the edge set representing dependencies. A node becomes ready when all its predecessors have executed;
  \item $\Policy : 2^V \to V$ is the \emph{scheduling policy}---a deterministic function that selects a single node from the ready set. In non-deterministic systems, $\Policy$ is a relation rather than a function;
  \item $\Obs = \{\mathit{success}, \mathit{failure}, \mathit{retry}, \mathit{escalate}\}$ is the \emph{outcome space} for node execution;
  \item $\Update : \State \times V \times \Obs \to \State$ is the \emph{state transition function} that updates the node's state based on the execution outcome and recomputes $\Units(\State')$ for the new global state.
\end{itemize}
\end{definition}

\begin{example}[Transition Example]
  Given $\ExecSystem$ with node $v$ in state $\mathit{running}$:
  \begin{itemize}
  \item If observation $\mathit{success}$ occurs, $\Update(\State, v, \mathit{success})$ sets $s_v \leftarrow \mathit{executed}$ and recomputes $\Units(\State')$, potentially enabling successor nodes;
  \item If observation $\mathit{failure}$ occurs, $\Update(\State, v, \mathit{failure})$ sets $s_v \leftarrow \mathit{failed}$ and may trigger recovery actions via the recovery layer.
  \end{itemize}
\end{example}

Using a relation rather than a function for $\Policy$ is essential: it lets the framework encompass both deterministic schedulers (where the next unit is uniquely determined by the state) and non-deterministic ones (where the same state may yield different choices on different invocations). As we show below, the Agent Loop falls into the latter category.

\subsection{Single-Ready-Unit Schedulers}

\begin{definition}[Single-Ready-Unit Scheduler]
\label{def:single-ready}
An execution system $\ExecSystem$ is a \emph{single-ready-unit scheduler} if, at every reachable state $s \in \State$, the ready set satisfies $|\Units(s)| \le 1$.
\end{definition}

When $|\Units| \le 1$, the scheduling relation $\Policy$ becomes vacuous: there is at most one unit to choose, so no meaningful scheduling decision exists. The ``choice'' of what to do next is effectively determined by whatever process produces the next element of $\Units$---in the Agent Loop's case, an LLM inference.

\paragraph{Observation.}
\label{obs:agent-loop-single}
The Agent Loop is a non-deterministic, single-ready-unit scheduler.

This follows directly from the Agent Loop's design: at each step, the LLM produces exactly one action $a$ (so $|\Units| = 1$), but the same context state may yield different actions on different invocations (so $\Policy$ is not functional).

\paragraph{Parallel tool calls.} Some Agent Loop implementations support parallel tool invocation within a single LLM call (e.g., OpenAI's \texttt{tool\_calls} field can contain multiple function calls that execute concurrently). In our framework, this corresponds to $|\Units| > 1$ but with a \emph{non-deterministic} $\Policy$---the set of parallel operations is determined by the LLM, not by an explicit scheduling policy. This distinction is crucial: even with parallel tool calls, the Agent Loop lacks \emph{structural} parallelism guarantees. The LLM may choose to invoke tools in parallel on one turn but sequentially on the next, depending on its internal reasoning. \SGH{}'s advantage is not merely parallelism, but \emph{explicit} parallelism: the ready set is computed from the DAG topology, and parallel dispatch is guaranteed whenever dependencies allow.

\paragraph{Boundary cases: Parallel Loop with Planner.} Some Agent Loop systems support both parallel tool calls and explicit planning. In our classification framework, such systems are classified based on their \emph{execution layer} behavior. If the execution remains a single LLM call per step (even with parallel tool calls), the system is classified as "Parallel Loop" ($|U| \geq 1$, non-deterministic Policy). If the planning produces a multi-step structure that is executed with a deterministic scheduler, the system moves toward the multi-ready-unit regime. The key distinction is whether the ready set is \emph{explicitly computed from a static structure} (\SGH{}) or \emph{determined dynamically by LLM inference} (Agent Loop variants).

\paragraph{Asynchronous operations.} Some Agent Loops support asynchronous operations (e.g., initiating multiple file reads in parallel, then waiting for all results). In our framework, this also corresponds to $|\Units| > 1$ with a non-deterministic policy. However, this form of parallelism is \emph{ad-hoc}---the LLM decides which operations to parallelize and when. \SGH{}'s explicit DAG makes parallelism a structural property: if node A and node B have no dependencies between them, they will be dispatched in parallel regardless of LLM inference.

\subsection{Multi-Ready-Unit Schedulers}

\begin{definition}[Multi-Ready-Unit Scheduler]
\label{def:multi-ready}
An execution system $\ExecSystem$ is a \emph{multi-ready-unit scheduler} if there exists a reachable state $s \in \State$ with $|\Units(s)| > 1$.
\end{definition}

When $|\Units| > 1$, the scheduling relation becomes a genuine design decision. Different policies (topological order, priority-based, parallel dispatch) produce different execution traces from the same ready set. This is the regime in which graph-based executors operate.

\subsection{The Scheduler Continuum}

\paragraph{Framework observation (Scheduler Continuum).}
\label{obs:continuum}
Execution systems for LLM agents can be arranged along a continuum parameterized by three axes. First is \textbf{ready-set cardinality}: $|\Units| = 1$ (single) versus $|\Units| \ge 1$ (multi). Second is \textbf{policy explicitness}: implicit (LLM inference over context) versus prompt-level (structured prompts constrain LLM output) versus state-machine (explicit policy over formal state). Third is \textbf{policy determinism}: non-deterministic ($\Policy$ is a relation) versus deterministic ($\Policy$ is a function).
This is a \emph{classification framework}, not a theorem. Its value is analytical: it makes precise what varies across systems and generates theoretical predictions (see below).

\paragraph{Theoretical predictions.} The scheduler continuum generates three theoretical predictions that, \emph{in principle}, can be tested using the experimental framework of \Cref{sec:experiment-framework}. First, $G_{\mathit{graph}} > 0$: moving from single-ready-unit to multi-ready-unit scheduling yields a measurable performance benefit, independent of planning quality. Second, $G_{\mathit{graph}}$ increases with task complexity: the benefit of multi-ready-unit scheduling grows as dependency depth and branching factor increase. Third, $G_{\mathit{replan}} > 0$ specifically on failure-prone tasks: the escalation protocol provides measurable benefit only when tasks exhibit non-trivial failure rates.

\textit{Important caveat.} These predictions are \emph{not empirically validated} in this paper, intended to guide future empirical work. They are logical consequences of the framework's assumptions and the scheduler-theoretic model. While plausible, they may not hold in practice. Their falsifiability is theoretical: the experimental framework provides a methodology for testing them, but executing those experiments requires a prototype implementation and curated benchmarks---both of which are ongoing work. We present these predictions as guidance for future research, not as proven claims.

\Cref{fig:continuum} illustrates the continuum along the first two axes. Moving from left to right, the ready set grows and the scheduling policy becomes more explicit. Importantly, ``adding a planner to the loop'' moves the system along the policy-explicitness axis but does not change the ready-set cardinality. Only a graph-based executor can reach the multi-ready-unit regime.

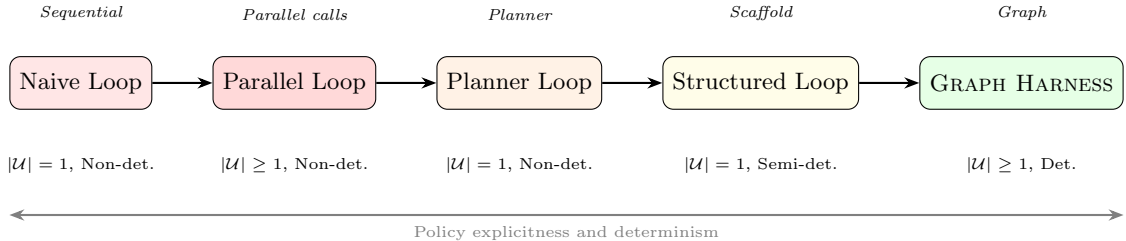
\begin{figure}[t]
\centering
\begin{tikzpicture}[
  node distance=0.8cm,
  box/.style={draw, rounded corners, minimum width=1.3cm, minimum height=0.7cm, align=center, font=\footnotesize},
  arr/.style={-{Stealth[length=2mm]}, thick}
]
  \node[box, fill=red!10] (nl) {Naive Loop};
  \node[box, fill=red!15, right=of nl] (pl) {Parallel Loop};
  \node[box, fill=orange!10, right=of pl] (pl2) {Planner Loop};
  \node[box, fill=yellow!10, right=of pl2] (sl) {Structured Loop};
  \node[box, fill=green!10, right=of sl] (sg) {\SGH{}};

  \draw[arr] (nl) -- (pl);
  \draw[arr] (pl) -- (pl2);
  \draw[arr] (pl2) -- (sl);
  \draw[arr] (sl) -- (sg);

  \node[below=0.5cm of nl, font=\tiny] {$|\Units|=1$, Non-det.};
  \node[below=0.5cm of pl, font=\tiny] {$|\Units|\ge1$, Non-det.};
  \node[below=0.5cm of pl2, font=\tiny] {$|\Units|=1$, Non-det.};
  \node[below=0.5cm of sl, font=\tiny] {$|\Units|=1$, Semi-det.};
  \node[below=0.5cm of sg, font=\tiny] {$|\Units|\ge1$, Det.};

  \node[above=0.35cm of nl, font=\tiny\itshape] {Sequential};
  \node[above=0.35cm of pl, font=\tiny\itshape] {Parallel calls};
  \node[above=0.35cm of pl2, font=\tiny\itshape] {Planner};
  \node[above=0.35cm of sl, font=\tiny\itshape] {Scaffold};
  \node[above=0.35cm of sg, font=\tiny\itshape] {Graph};

  \draw[{Stealth[length=2mm]}-{Stealth[length=2mm]}, thick, gray]
    ([yshift=-1.4cm]nl.south west) -- ([yshift=-1.4cm]sg.south east)
    node[midway, below, font=\tiny] {Policy explicitness and determinism};
\end{tikzpicture}
\caption{The scheduler continuum (extended). The new ``Parallel Loop'' category represents Agent Loops with parallel tool calls (e.g., OpenAI's \texttt{tool\_calls} field). These achieve $|\Units| \ge 1$ but with a non-deterministic policy, distinguishing them from \SGH{}'s explicit, topology-based scheduling.}
\label{fig:continuum}
\end{figure}

\subsection{Why Unification Matters}

The scheduler framework provides three analytical benefits:

\paragraph{Comparability.} Agent Loops and graph executors can be compared on the same axes. The question ``is a graph executor better than an Agent Loop?'' becomes ``does moving from $|\Units|=1$ to $|\Units| \ge 1$ with a deterministic $\Policy$ improve performance, and by how much?''

\paragraph{Expressiveness characterization.} Expressiveness is precisely the range of ready-set configurations an execution system can produce. A single-ready-unit scheduler is expressiveness-bounded by definition: it cannot represent concurrent, conditional, or fallback execution.

\paragraph{Controllability characterization.} Controllability is precisely the degree to which $\Policy$ is functional and $\State$ is explicit. An Agent Loop scores low on both counts; \SGH{} scores high by construction.

\paragraph{Derivable hypotheses.} The framework generates three hypotheses that can, in principle, be tested with the experimental framework of \Cref{sec:experiment-framework}: (1)~$G_{\mathit{graph}} > 0$, i.e., moving to multi-ready-unit scheduling should yield a measurable benefit independent of planning; (2)~$G_{\mathit{graph}}$ should increase with task complexity; (3)~$G_{\mathit{recovery}} > 0$ primarily on tasks with non-trivial failure rates. Validating these hypotheses is left to future work; here we simply note that they are logical consequences of the framework's assumptions.

\paragraph{Computational complexity.} The scheduler framework provides a **semantic** classification of execution systems but does not yet provide **complexity-theoretic** guarantees. For example, we do not prove that computing the optimal schedule under budget constraints is NP-hard (though this is likely, given the reduction from classical DAG scheduling~\citep{topcuoglu2002list}). Formalizing the computational complexity of SGH scheduling is a direction for future work.

\subsection{Worked Example}
\label{sec:worked-example}

We now trace a concrete task through both an Agent Loop and \SGH{} to illustrate how the formal definitions translate into concrete behavior and where the structural differences manifest.

\paragraph{Task.} Investigate and fix a Python authentication bug, while simultaneously updating documentation. The task decomposes into ten steps:

\begin{enumerate}[leftmargin=*]
  \item \textbf{Search auth} (\texttt{search\_auth}): Find authentication-related files.
  \item \textbf{Search utils} (\texttt{search\_utils}): Find utility/helper files.
  \item \textbf{Read auth} (\texttt{read\_auth}): Read the auth source file. \emph{Depends on: \texttt{search\_auth}.}
  \item \textbf{Read utils} (\texttt{read\_utils}): Read the utils source file. \emph{Depends on: \texttt{search\_utils}.}
  \item \textbf{Analyze} (\texttt{analyze}): Identify the root cause. \emph{Depends on: \texttt{read\_auth}, \texttt{read\_utils}} (all\_of).
  \item \textbf{Fix A} (\texttt{fix\_A}): Apply patch to auth module. \emph{Depends on: \texttt{analyze}.}
  \item \textbf{Fix B} (\texttt{fix\_B}): Apply alternative patch (different approach). \emph{Depends on: \texttt{analyze}.}
  \item \textbf{Test} (\texttt{run\_tests}): Run the test suite. \emph{Depends on: \texttt{fix\_A} or \texttt{fix\_B}} (any\_of).
  \item \textbf{Docs} (\texttt{update\_docs}): Update API documentation. \emph{Depends on: \texttt{analyze}.}
  \item \textbf{Report} (\texttt{report}): Generate summary. \emph{Depends on: \texttt{run\_tests}, \texttt{update\_docs}} (all\_of).
\end{enumerate}

This DAG has three notable structural features (\Cref{fig:running-example-dag}): (1)~\texttt{search\_auth} and \texttt{search\_utils} are independent and can execute in parallel ($|\Units| = 2$); (2)~\texttt{fix\_A} and \texttt{fix\_B} are alternative patches, only one of which needs to succeed (any\_of join); (3)~\texttt{update\_docs} and the fix/test path can proceed in parallel.

\subsubsection{Agent Loop Execution}

In the Agent Loop, the LLM reasons over a growing context window:

\begin{verbatim}
Turn 1:  LLM decides to search auth files
          → search_code("auth bug")
Turn 2:  LLM reads results, decides to search utils too
          → search_code("utils helper")    [sequential, not parallel]
Turn 3:  LLM reads auth file
          → read_file("auth.py")
Turn 4:  LLM reads utils file
          → read_file("utils.py")
Turn 5:  LLM analyzes both files
          → analyze(...)                    [implicit dep on Turns 3-4]
Turn 6:  LLM writes a fix
          → write_fix("patch A")            [no alternative tried]
Turn 7:  LLM runs tests
          → run_tests(...)
          → FAIL: 2 tests still failing
Turn 8:  LLM decides to try a different fix
          → write_fix("patch B")            [ad-hoc, no protocol]
Turn 9:  LLM runs tests again
          → run_tests(...)
          → OK: all pass
Turn 10: LLM updates docs
          → update_docs(...)
Turn 11: LLM generates report
\end{verbatim}

\paragraph{Problems illustrated.} The Agent Loop execution illustrates several structural weaknesses. First, there is no parallelism: \texttt{search\_auth} and \texttt{search\_utils} could run concurrently, but the Agent Loop processes them sequentially (Turns 1--2). Similarly, \texttt{update\_docs} could run in parallel with the fix/test cycle, but the Agent Loop does them serially (Turns 8--10). Second, there are no structured alternatives: the LLM tries one fix first, then switches to another after failure, with no mechanism to express ``try A or B, whichever works.'' The switch is an ad-hoc LLM decision. Third, recovery is unbounded: at Turn 7, the LLM autonomously decides to try a different approach with no bound on how many alternatives it might attempt, no formal diagnosis of why the first fix failed, and no protocol for escalation. Fourth, dependencies are implicit: turns 3--5 depend on prior results through context, not through structural guards, and a long context window increases the risk of ``forgetting'' that \texttt{analyze} requires \emph{both} files. Finally, there is no plan versioning: the original plan (``fix the auth module'') was implicitly revised mid-execution (``try a different approach''), with no audit trail recording this change.

\subsubsection{\SGH{} Execution}

The planner generates a static DAG (\Cref{fig:running-example-dag}) before execution begins.

\begin{verbatim}
Plan version 1 generated. DAG validated (acyclic, reachable, joins consistent).

Ready-set: {search_auth, search_utils}     |U| = 2
  → dispatch search_auth                   → executed
  → dispatch search_utils                  → executed    [parallel]
Ready-set: {read_auth, read_utils}         |U| = 2
  → dispatch read_auth                     → executed
  → dispatch read_utils                    → executed    [parallel]
Ready-set: {analyze}                       |U| = 1
  → dispatch analyze                       → executed
Ready-set: {fix_A, fix_B, update_docs}     |U| = 3
  → dispatch fix_A                         → FAILED (transient)
  → dispatch fix_B                         → executed
  → dispatch update_docs                   → executed    [parallel]
  fix_A: Level 1 recovery → skipped (fix_B already succeeded for any_of)
Ready-set: {run_tests}                     |U| = 1
  → dispatch run_tests                     → executed
Ready-set: {report}                        |U| = 1
  → dispatch report                        → executed

All nodes terminal. Plan version 1 complete.
\end{verbatim}

\paragraph{Structural differences.} The SGH execution demonstrates five key structural differences. First, parallel execution: three ready-set expansions produce $|\Units| > 1$, with searches running in parallel ($|\Units|=2$), reads running in parallel ($|\Units|=2$), and fix/docs running concurrently ($|\Units|=3$). The Agent Loop cannot achieve this because $|\Units| \le 1$ by design. Second, structured alternatives: the any\_of join from \texttt{fix\_A}/\texttt{fix\_B} to \texttt{run\_tests} encodes ``try either patch'' as a first-class structural feature, not an ad-hoc LLM decision, so when \texttt{fix\_B} succeeds, \texttt{fix\_A} is skipped with no wasted retry. Third, bounded recovery: \texttt{fix\_A}'s transient failure triggers Level~1 recovery, but the any\_of join makes recovery unnecessary because \texttt{fix\_B} already satisfies the dependency, so the system does not waste budget retrying a failed path when an alternative has already succeeded. Fourth, explicit dependencies: each node's ready-set membership is computed from the DAG, so \texttt{analyze} waits for \emph{both} \texttt{read\_auth} and \texttt{read\_utils}---this is enforced structurally, not by LLM memory. Fifth, auditability: every state transition is recorded with its plan version, and the audit trail shows that \texttt{fix\_A} failed, \texttt{fix\_B} succeeded, and \texttt{fix\_A} was skipped (not retried wastefully).

\paragraph{Step count comparison.} The Agent Loop required 11 turns (all sequential). \SGH{} required 10 node dispatches across 6 scheduling rounds, with 4 of those rounds dispatching multiple nodes in parallel. Wall-clock time is reduced proportionally to the degree of available parallelism.

\subsection{Planning Failures and Their Consequences}
\label{sec:planning-failures}

The worked example in \Cref{sec:worked-example} assumes the planner generates a perfect DAG---one that correctly identifies all dependencies, all parallelism opportunities, and the correct join semantics (all\_of vs any\_of). In practice, planning can fail in several ways, each with different consequences.

\paragraph{Missing dependencies.} The planner may forget to add an edge between nodes that actually depend on each other. For example, it might omit the dependency from \texttt{read\_auth} to \texttt{analyze}, allowing \texttt{analyze} to execute before \texttt{read\_auth} completes. In \SGH{}, this manifests as a contract violation: \texttt{analyze} receives incomplete input (missing \texttt{read\_auth} results) and fails the output contract validation, triggering the recovery protocol. The three-level escalation ensures the system does not silently proceed with corrupt data.

\paragraph{Spurious dependencies.} The planner may add unnecessary dependencies, restricting parallelism. For example, it might incorrectly assert that \texttt{search\_utils} must wait for \texttt{search\_auth} to complete, even though they are independent. This reduces $|\Units|$ from 2 to 1, eliminating parallelism. \SGH{} does not detect this error---the DAG is structurally valid---but the execution is suboptimal. The cost is wasted time (sequential execution instead of parallel), not correctness.

\paragraph{Incorrect branch selection.} The planner may choose the wrong join semantics. For example, it might use \texttt{all\_of} instead of \texttt{any\_of} for the patch selection, requiring both \texttt{fix\_A} and \texttt{fix\_B} to succeed before proceeding. This is particularly damaging when only one patch is viable; the system will repeatedly retry the failing patch (level 1 recovery), escalate to patch generation (level 2 recovery), and eventually require a full replan (level 3 recovery). The cost is wasted execution time and tokens.

\paragraph{Over-decomposition.} The planner may split a task into too many fine-grained nodes, increasing DAG size and overhead. For example, splitting \texttt{read\_auth} into \texttt{find\_auth\_file}, \texttt{open\_auth\_file}, and \texttt{read\_auth\_lines} introduces unnecessary serialization and coordination overhead. \SGH{} executes each node separately, so the overhead scales with node count. There is no automatic mechanism to merge nodes; the planner must choose appropriate granularity.

\paragraph{Under-decomposition.} The planner may lump multiple operations into a single node, missing parallelism opportunities. For example, combining \texttt{search\_auth} and \texttt{search\_utils} into a single \texttt{search\_files} node eliminates parallelism and makes error attribution difficult (if \texttt{search\_files} fails, it's unclear which search failed). \SGH{} cannot recover the lost parallelism without a replan.

\paragraph{Summary.} Planning failures are inevitable in practice. \SGH{} handles them through three mechanisms: (1)~\textbf{Structural validation} catches cycles, unreachable nodes, and inconsistent joins. (2)~\textbf{Contract validation} catches missing dependencies and incorrect inputs at runtime. (3)~\textbf{Recovery protocol} handles execution failures and can trigger replans when necessary. However, \SGH{} cannot fix spurious dependencies, incorrect branch selection, over-decomposition, or under-decomposition without a new plan version. The trade-off between planning accuracy and recovery cost is an open question that the experimental framework (\Cref{sec:experiment-framework}) is designed to quantify through the $G_{\mathit{plan}}$ metric.

\subsubsection{Scheduler-Framework Mapping}

The worked example illustrates how the formal definitions (\Cref{sec:scheduler-framework}) translate into concrete behavior. The DAG's ten nodes form the action universe $\mathcal{A}$, with each step (search, read, analyze, fix, test, docs, report) representing an executable unit. The execution state $\State$ records each node's status (ready, running, executed, failed, cancelled), which is defined in \Cref{def:node-state}. The ready set $\Units(\State)$ computes which nodes have all predecessors executed; for example, after the two searches complete, $\Units = \{\texttt{read\_auth}, \texttt{read\_utils}\}$ ($|\Units| = 2$), which is the core of \Cref{def:execution-system}. The scheduling policy $\Policy$ is deterministic, dispatching all ready nodes, which is why SGH achieves $|\Units| > 1$, whereas in the Agent Loop, $\Policy$ is non-deterministic and only ever produces $|\Units| = 1$. The observation function $\Obs$ records each node's outcome (success/failure/retry/escalate), and the observation of \texttt{fix\_A} as \textit{failure} and \texttt{fix\_B} as \textit{success} demonstrates this outcome space. Finally, the state transition function $\Update$ recomputes $\Units$ after each observation; when \texttt{fix\_B} succeeds, $\Update$ cancels \texttt{fix\_A} (any\_of join satisfied) and adds \texttt{run\_tests} to $\Units$.

\paragraph{Key properties illustrated.} This worked example demonstrates four properties that an Agent Loop ($|\Units| \le 1$) cannot exhibit. First, parallel dispatch: the ready set reaches cardinalities of 2 and 3, enabling concurrent execution of independent nodes, which is the structural advantage of multi-ready-unit scheduling. Second, structured alternatives: the any\_of join from \texttt{fix\_A}/\texttt{fix\_B} to \texttt{run\_tests} makes ``try either patch'' a structural property, not an LLM decision. Third, efficient recovery: when \texttt{fix\_A} fails but \texttt{fix\_B} succeeds, the system skips recovery entirely because the any\_of join is already satisfied, so no retry budget is wasted. Fourth, deterministic progress: the ready set at each step is fully determined by the DAG topology and node states, with no dependence on LLM inference for scheduling decisions.

\section{Design Principles and Trade-offs}
\label{sec:design-principles}

Any LLM agent execution system must balance three competing goals: \emph{controllability} (predictability, verifiability, auditability), \emph{expressiveness} (the range of task structures it can represent), and \emph{implementability} (engineering complexity and risk). This section analyzes the trade-off space and derives four design principles that govern \SGH{}.

\subsection{The Trade-off Space}

Our survey of 70 open-source projects (see \Cref{sec:appendix-survey} for methodology) shows a clear pattern (\Cref{tab:tradeoff-survey}). Projects are classified by primary execution pattern: \emph{Agent Loop} (single LLM call per step), \emph{Event-driven} (execution triggered by external events), \emph{State-machine} (explicit FSM governs transitions), \emph{Graph/flow} (nodes and edges form explicit topology), and \emph{Hybrid} (combines multiple patterns). In the scheduler framework, these map as follows: Agent Loop $\to$ $|\Units| \le 1$, non-deterministic $\Policy$; Event-driven $\to$ $|\Units| \le 1$, externally triggered $\Delta$; State-machine $\to$ $|\Units| \le 1$, semi-explicit $\Policy$; Graph/flow $\to$ $|\Units| \ge 1$, explicit $\Policy$.

\begin{table}[t]
\centering
\caption{Observed trade-offs across 70 surveyed agent systems.\label{tab:tradeoff-survey-note}Categories are mutually exclusive based on the primary execution pattern observed. Percentages are approximate; sensitivity analysis within $\pm$10\% does not change the ordinal ranking (Agent Loop is most common, Graph/flow is least common). See \Cref{sec:appendix-survey} for methodology.}
\label{tab:tradeoff-survey}
\begin{tabular}{@{}lccc@{}}
\toprule
\textbf{Category} & \textbf{Expressiveness} & \textbf{Controllability} & \textbf{Implementability} \\
\midrule
Agent Loop (60\%)       & Low    & Low    & High \\
Event-driven (15\%)     & Low    & Medium & High \\
State-machine (10\%)    & Medium & High   & Medium \\
Graph/flow orchestration (5\%) & \textbf{High} & Low & Low \\
Hybrid (10\%)           & Medium & Medium & Medium \\
\bottomrule
\end{tabular}
\end{table}

Systems with the highest expressiveness---graph and flow orchestration frameworks such as those built on LangGraph or custom FSM engines---consistently exhibit the lowest controllability and the highest implementation risk. Conversely, simple Agent Loops are trivially implementable but offer no structural guarantees.

\SGH{} targets a design point that is deliberately \emph{not} at the expressiveness frontier. The goal is to maximize controllability while retaining enough expressiveness for the broad class of \emph{verifiable engineering tasks}---tasks whose dependency structure can be articulated before execution begins.

\subsection{Principle 1: Controllability First}

\begin{quote}
\textit{In the face of uncertain benefit, prioritize the predictability and verifiability of the execution process.}
\end{quote}

\paragraph{Rationale from survey.} Our qualitative analysis of 70 agent systems showed an observed pattern: failure-loop behavior (infinite or unbounded recovery cycles) was frequently observed among the graph/flow orchestration systems in our dataset (3 out of 4 projects), while such behavior was rarely observed in state-machine-based systems (0 out of 7 projects). We did not formally quantify this observation due to the subjective nature of classifying "failure-loop behavior," and different reviewers might reach different conclusions. However, the qualitative pattern was consistent across multiple codebase inspections and GitHub issue analyses. This observation suggests that flexible, unstructured control flows---while expressive---may be more prone to recovery pathologies. For engineering tasks, the cost of an uncontrolled execution (silent data corruption, unreproducible failures, un-auditable decisions) often exceeds the benefit of representing a more complex task structure. Controllability is the foundation on which verifiability and trust are built.

\paragraph{Sacrificed expressiveness.} Competitive parallelism (\texttt{first\_of}), recursive sub-graph expansion, and parent-chain rollback are excluded. Each of these features introduces non-determinism into the ready set or the state transition graph, undermining the predictability guarantee.

\subsection{Principle 2: Stable Execution Commitment}

\begin{quote}
\textit{Once an execution plan is generated and validated, its structure must not be modified during execution.}
\end{quote}

\paragraph{Rationale from survey.} In our qualitative analysis of GitHub issues across 70 projects, versioned-history systems (e.g., LangGraph's thread-scoped state) appeared to support more effective debugging than mutable-history systems. We observed that developers could more reliably reconstruct execution history when plans were versioned rather than mutated, though we did not formally measure success rates and this observation is based on subjective assessment. When plans are modified mid-execution, root cause analysis becomes challenging---it is unclear whether a failure stems from the original plan, a modification, or an interaction between both. An immutable plan-version is the minimal unit of accountability.

\textit{Caveat.} This observation is based on manual inspection of GitHub issues and may not generalize across all systems or use cases. The distinction between "versioned-history" and "mutable-history" is itself a continuum (some systems allow partial versioning), and our classification reflects the primary execution pattern rather than a binary classification.

\paragraph{Sacrificed flexibility.} The runtime cannot dynamically add or remove nodes, redirect edges, or rewrite the plan based on intermediate LLM suggestions. Any structural change requires generating a new plan version through a controlled replan protocol.

\subsection{Principle 3: Bounded Recovery}

\begin{quote}
\textit{Recovery actions must have explicit trigger conditions, bounded scope, and a strict escalation protocol.}
\end{quote}

\paragraph{Rationale from survey.} Our analysis of 70 agent systems found that Agent Loop implementations commonly lacked any formal bounds on recovery attempts. In practice, this shows up as either (a) infinite retry loops (when the LLM stubbornly insists on a failing approach) or (b) premature abandonment (when a transient error triggers an unnecessary replan). Bounded recovery turns failure handling from an ad-hoc LLM decision into a protocol with auditable escalation rules, preventing both failure loops and premature replanning.

\paragraph{Sacrificed flexibility.} Recovery decisions are decoupled from the execution loop. An LLM may \emph{diagnose} a failure, but the \emph{action} taken (retry, patch, or replan) follows a fixed escalation ladder. The system cannot skip from a transient error directly to full replan without first exhausting lower-level recovery options.

\subsection{Principle 4: Side-Effect Classification}

\begin{quote}
\textit{Executable units should be classified by their side-effect profile, and the scheduling policy should respect these classifications.}
\end{quote}

\paragraph{Rationale.} Not all actions are equal in their irreversibility. A read-only API call can be freely retried; a database write cannot. Without side-effect classification, the scheduler risks executing dangerous operations speculatively---a risk that grows with the ready-set cardinality.

\paragraph{Sacrificed flexibility.} High side-effect operations (writes, deletions, external notifications) face stricter scheduling constraints. They may not be speculatively dispatched in parallel, and their retry budgets are tighter. This reduces parallelism for certain task structures but eliminates an entire class of safety violations.

\subsection{Summary of Trade-offs}

\Cref{tab:tradeoff-summary} summarizes what each principle sacrifices and gains.

\begin{table}[t]
\centering
\caption{Design principles: what is sacrificed and what is gained.}
\label{tab:tradeoff-summary}
\begin{tabular}{@{}p{3cm}p{4.5cm}p{4cm}@{}}
\toprule
\textbf{Principle} & \textbf{Sacrificed} & \textbf{Gained} \\
\midrule
Controllability first & Competitive parallelism, recursive expansion, parent-chain rollback & Predictable execution, verifiable traces \\
Stable commitment & Dynamic plan modification & Auditable plan history, reliable attribution \\
Bounded recovery & Ad-hoc LLM-driven recovery & Deterministic escalation, failure-loop prevention \\
Side-effect classification & Unrestricted parallel dispatch & Safety guarantees, risk-bounded scheduling \\
\bottomrule
\end{tabular}
\end{table}

The sum of these sacrifices defines the \emph{expressiveness boundary} of \SGH{}. We do not claim this boundary is universal---only that it is appropriate for the class of verifiable engineering tasks that form the primary target of this work. \Cref{sec:limitations} discusses the boundary in detail and sketches how it may be relaxed in future work.

\section{Execution Commitment and the Static Graph Model}
\label{sec:execution-commitment}

This section formalizes the core data structure of \SGH{}: the execution plan as an immutable commitment, the static DAG as its structural substrate, and the three-layer separation that decouples planning, execution, and recovery.

\subsection{The Execution Plan as Commitment}

\begin{definition}[Execution Plan]
\label{def:execution-plan}
An \emph{execution plan} is a tuple
\[
  \ExecPlan = (\mathit{id},\; \mathit{version},\; V,\; E,\; \sigma,\; \kappa)
\]
where $V$ is a set of nodes, $E \subseteq V \times V$ is a set of directed edges, $\sigma : V \to \text{NodeConfig}$ assigns a configuration (action, retry policy, side-effect level) to each node, and $\kappa$ is an output contract specifying what the plan must produce upon completion.
\end{definition}

\begin{definition}[Plan Invariant]
\label{def:plan-invariant}
The \emph{plan invariant} states: for the lifetime of plan version $v$, the structure $(V, E)$ is immutable. The \emph{only} mechanism that produces a different structure is the creation of a new plan version $v+1$ via the replan protocol.
\end{definition}

The plan invariant formally embodies Principle~2 (stable commitment), with three theoretical consequences. First is \textbf{attributability}: every execution trace can be unambiguously associated with a specific $(\mathit{id}, \mathit{version})$ pair, enabling precise failure diagnosis. Second is \textbf{predictability}: the ready set $\Units(s)$ at any state $s$ is fully determined by the fixed topology $(V, E)$ and the current node states, so no structural surprises can emerge during execution. Third is \textbf{verifiability}: a plan can be validated before execution begins (checking for cycles, unreachable nodes, or unsatisfiable dependency constraints) without concern that the structure will change mid-flight.

\subsection{Why a Static DAG?}

Choosing a \emph{static directed acyclic graph} as the plan's topology follows directly from the design principles. Three alternatives were considered and rejected for \SGH{}'s target task class:

\begin{table}[t]
\centering
\caption{Graph topology alternatives and their trade-offs.}
\label{tab:graph-choices}
\begin{tabular}{@{}lccc@{}}
\toprule
\textbf{Topology} & \textbf{Expressiveness} & \textbf{Predictability} & \textbf{Verifiability} \\
\midrule
Static DAG (chosen)      & Medium & \textbf{High} & \textbf{High} \\
Recursive expansion graph & High   & Low           & Low \\
Parent-chain rollback graph & High & Low           & Low \\
Dynamic topology graph    & High   & Low           & Low \\
\bottomrule
\end{tabular}
\end{table}

A DAG guarantees three properties by construction. First is \textbf{no circular dependencies}: acyclicity ensures that the ready set can always make progress, since at least one node with all predecessors satisfied must exist. Second is \textbf{finite execution}: the number of nodes is fixed, bounding the maximum number of execution steps. Third is \textbf{topological scheduling}: a topological order over $(V, E)$ provides a canonical schedule that the policy $Policy$ may refine but never violate.

\subsection{Three-Layer Separation}

\SGH{} separates the lifecycle of a task into three layers, each with a distinct responsibility and a well-defined interface to the others.

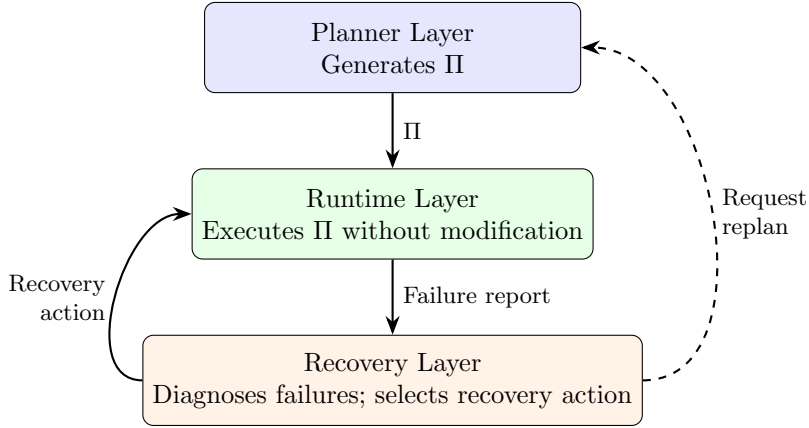
\begin{figure}[t]
\centering
\begin{tikzpicture}[
  layer/.style={draw, rounded corners, minimum width=5cm, minimum height=1.2cm, align=center, font=\small},
  arr/.style={-{Stealth[length=2.5mm]}, thick}
]
  \node[layer, fill=blue!10] (planner) {Planner Layer\\Generates $\ExecPlan$};
  \node[layer, fill=green!10, below=1cm of planner] (runtime) {Runtime Layer\\Executes $\ExecPlan$ without modification};
  \node[layer, fill=orange!10, below=1cm of runtime] (recovery) {Recovery Layer\\Diagnoses failures; selects recovery action};
  
  \draw[arr] (planner) -- node[right, font=\footnotesize] {$\ExecPlan$} (runtime);
  \draw[arr] (runtime) -- node[right, font=\footnotesize] {Failure report} (recovery);
  \draw[arr] (recovery.west) to[out=180,in=180] node[left, font=\footnotesize, align=right] {Recovery\\action} (runtime.west);
  \draw[arr, dashed] (recovery.east) to[out=0,in=0] node[right, font=\footnotesize, align=left] {Request\\replan} (planner.east);
\end{tikzpicture}
\caption{Three-layer separation: planning, execution, and recovery.}
\label{fig:three-layers}
\end{figure}

\paragraph{Planner Layer.} Receives a task intent and produces a validated execution plan $\ExecPlan$. The planner may be an LLM, a template-based generator, or a hybrid. Its output is a static DAG that satisfies the plan invariant.

\paragraph{Runtime Layer.} Takes $\ExecPlan$ as input and executes it without modifying $(V, E)$. The runtime maintains per-node state, computes the ready set, applies $\Policy$, and records observations. It reports failures to the recovery layer but does not decide how to handle them.

\paragraph{Recovery Layer.} Receives failure reports from the runtime, diagnoses the root cause, and selects a recovery action. The recovery layer is \emph{modeled independently} from the execution loop: its decisions are based on a diagnostic context that is invisible to the runtime's execution context.

\subsection{Context Separation}

The three-layer separation is reinforced by a strict context partition:

\begin{definition}[Context Partition]
\label{def:context-partition}
The system maintains two disjoint contexts:
\begin{itemize}[leftmargin=*]
  \item \textbf{Execution context} $\mathcal{C}_{\mathit{exec}}$: inputs, visible artifacts, runtime state, budget. Accessible to nodes during execution.
  \item \textbf{Diagnostic context} $\mathcal{C}_{\mathit{diag}}$: failure history, planner annotations, prior plan versions. Accessible only to the recovery and planner layers.
\end{itemize}
The constraint is: $\mathcal{C}_{\mathit{exec}} \cap \mathcal{C}_{\mathit{diag}} = \emptyset$ during node execution. Diagnostic information may be propagated into $\mathcal{C}_{\mathit{exec}}$ only through the planner (as part of a new plan version).
\end{definition}

This partition prevents a subtle pathology: using failure history as implicit input to subsequent execution steps, which couples recovery decisions to execution logic and undermines both predictability and auditability.

\section{Node State Machine and Recovery Protocol}
\label{sec:state-machine}

This section formalizes the per-node state machine, the three-level recovery protocol, and the error classification that connects them.

\subsection{Node State Machine}

\begin{definition}[Node State]
\label{def:node-state}
Each node $v \in V$ has a state drawn from
\[
  \Sigma = \{\texttt{pending},\; \texttt{ready},\; \texttt{running},\; \texttt{waiting\_human},\; \texttt{blocked},\; \texttt{executed},\; \texttt{failed\_retryable},\; \texttt{failed},\; \texttt{cancelled},\; \texttt{skipped}\}.
\]
The terminal states are
\[
  \Sigma_{\mathit{term}} = \{\texttt{executed},\; \texttt{failed},\; \texttt{cancelled},\; \texttt{skipped}\}.
\]
\end{definition}

\begin{definition}[Bounded Execution]
\label{def:bounded-execution}
Each node $v \in V$ has a timeout $\tau_v \in \mathbb{R}^+$ and a retry budget $b_v \in \mathbb{N}$. Additionally, any node in \texttt{waiting\_human} has a finite timeout $T_{\mathit{human}} \in \mathbb{R}^+$ (the same bound applies to all such nodes in a given execution). Execution transitions from $\mathit{running}$ to $\mathit{failed}$ if:
\begin{enumerate}
\item Elapsed time $> \tau_v$ (timeout), OR
\item Number of retries $> b_v$ (budget exceeded)
\end{enumerate}
This ensures no node remains in $\mathit{running}$ indefinitely.
\end{definition}

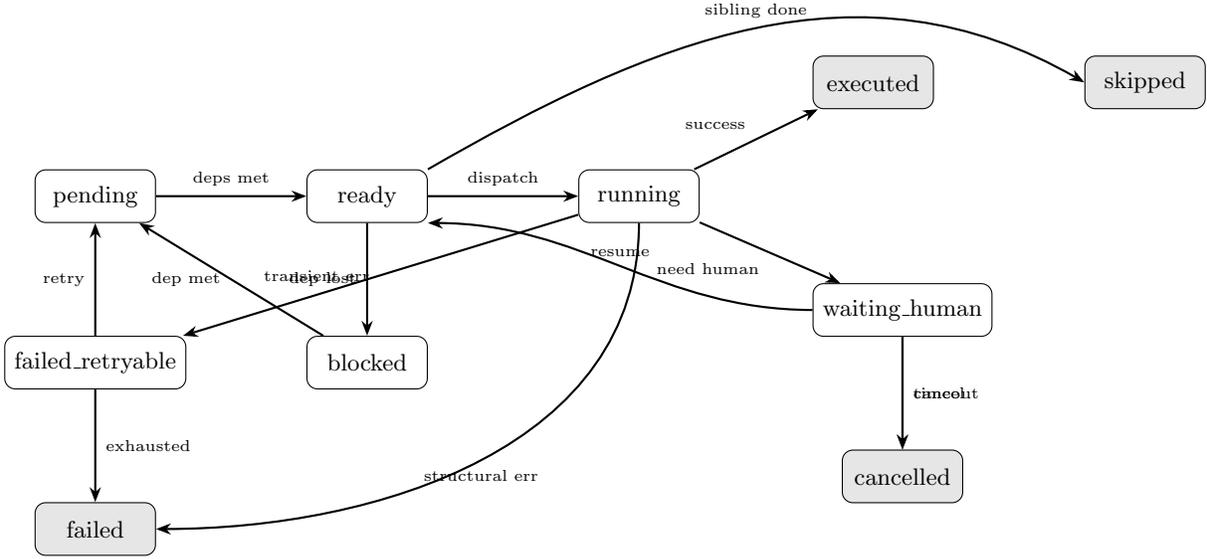
\begin{figure}[t]
\centering
\begin{tikzpicture}[
  state/.style={draw, rounded corners, minimum width=1.6cm, minimum height=0.7cm, align=center, font=\footnotesize},
  terminal/.style={state, fill=gray!20},
  arr/.style={-{Stealth[length=2mm]}, thick},
  node distance=1.5cm and 2cm
]
  \node[state] (pending) {pending};
  \node[state, right=of pending] (ready) {ready};
  \node[state, right=of ready] (running) {running};
  \node[terminal, above right=0.8cm and 1.5cm of running] (executed) {executed};
  \node[state, below right=0.8cm and 1.5cm of running] (wh) {waiting\_human};
  \node[terminal, below=of wh] (cancelled) {cancelled};
  \node[state, below=of ready] (blocked) {blocked};
  \node[state, below=of pending] (fr) {failed\_retryable};
  \node[terminal, below=of fr] (failed) {failed};
  \node[terminal, right=of executed] (skipped) {skipped};
  
  \draw[arr] (pending) -- node[above, font=\tiny] {deps met} (ready);
  \draw[arr] (ready) -- node[above, font=\tiny] {dispatch} (running);
  \draw[arr] (running) -- node[above left, font=\tiny] {success} (executed);
  \draw[arr] (running) -- node[below left, font=\tiny] {need human} (wh);
  \draw[arr] (wh) -- node[right, font=\tiny] {cancel} (cancelled);
  \draw[arr] (wh) -- node[right, font=\tiny] {timeout} (cancelled);
  \draw[arr] (wh.west) to[out=180,in=0] node[above, font=\tiny] {resume} (ready.south east);
  \draw[arr] (ready) -- node[left, font=\tiny] {dep lost} (blocked);
  \draw[arr] (blocked) -- node[left, font=\tiny] {dep met} (pending);
  \draw[arr] (running) -- node[left, font=\tiny] {transient err} (fr);
  \draw[arr] (fr) -- node[left, font=\tiny] {retry} (pending);
  \draw[arr] (fr) -- node[right, font=\tiny] {exhausted} (failed);
  \draw[arr] (running.south) to[out=-90,in=0] node[below, font=\tiny] {structural err} (failed.east);
  \draw[arr] (ready.north east) to[out=30,in=150] node[above, font=\tiny] {sibling done} (skipped.west);
\end{tikzpicture}
\caption{Node state machine. Terminal states (gray) are stable: once entered, they are never left. \textit{Transitions:} \texttt{running} $\to$ \texttt{failed\_retryable} occurs on transient errors (network timeout, rate limit) that may succeed on retry; \texttt{running} $\to$ \texttt{failed} occurs on structural errors (missing dependencies, invalid plan) that cannot be fixed by retrying the same node.}
\label{fig:state-machine}
\end{figure}

\paragraph{Terminal stability.} Terminal states are \emph{absorbing}: once a node enters $\Sigma_{\mathit{term}}$, no subsequent event can change its state. This property is built into the state machine definition---there are no outgoing transitions from any terminal state (\Cref{tab:state-transitions})---and is essential for the correctness of ready-set computation: once a predecessor is \texttt{executed}, the scheduler can rely on this fact permanently.

\begin{proposition}[Progress Guarantee]
\label{prop:progress}
Consider a DAG $(V, E)$ under bounded execution (Definition~\ref{def:bounded-execution}) where every node in \texttt{waiting\_human} has a timeout $T_{\mathit{human}}$ (as defined in Definition~\ref{def:bounded-execution}). If at least one node is in a non-terminal state, then either (a)~a node transition to \texttt{ready} is enabled, or (b)~a node transition to a terminal state (\texttt{executed}, \texttt{failed}, or \texttt{cancelled}) is enabled.
\end{proposition}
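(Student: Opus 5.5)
The plan is a case analysis on the state of a suitably chosen non-terminal node, using acyclicity of $(V,E)$ to pick that node. Let $N \subseteq V$ be the set of non-terminal nodes, nonempty by hypothesis. Since $(V,E)$ is a DAG, the induced subgraph on $N$ is also acyclic, so we can fix a node $v \in N$ that is minimal with respect to $E$ restricted to $N$. Minimality means every predecessor $u$ of $v$ lies in $\Sigma_{\mathit{term}}$. By terminal stability (the absence of outgoing transitions from $\Sigma_{\mathit{term}}$, \Cref{fig:state-machine}), these predecessor states are permanent. Consequently, whatever transition $v$ admits now cannot be disabled later by changes upstream.

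We then go through the non-terminal states $\Sigma \setminus \Sigma_{\mathit{term}}$ one at a time and exhibit an enabled transition of type (a) or (b) for $v$, or in one case for a successor of $v$.
\begin{itemize}
\item \texttt{pending}: if the dependency condition of $v$ is met (all\_of requires every predecessor \texttt{executed}; any\_of requires at least one), then ``deps met'' gives \texttt{pending}$\to$\texttt{ready}, which is (a).
\item \texttt{blocked}: ``dep met'' returns $v$ to \texttt{pending}, after which the same argument applies.
\item \texttt{ready}: ``dispatch'' is enabled.
\item \texttt{running}: Definition~\ref{def:bounded-execution} guarantees that within $\tau_v$ the node either succeeds, reaching \texttt{executed} (b), or times out to \texttt{failed} (b). It may instead move to \texttt{failed\_retryable} or \texttt{waiting\_human}.
\item \texttt{failed\_retryable}: either ``exhausted'' to \texttt{failed} is enabled (b), or ``retry'' to \texttt{pending} is enabled, which the budget $b_v$ permits only finitely often.
\item \texttt{waiting\_human}: the timeout $T_{\mathit{human}}$ forces \texttt{cancelled} (b), unless ``resume'' to \texttt{ready} (a) occurs first.
\end{itemize}

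The main obstacle is the \texttt{pending} (and \texttt{blocked}) case when the dependency condition of $v$ is \emph{not} met. This happens when all predecessors are terminal but some are \texttt{failed}, \texttt{cancelled}, or \texttt{skipped}, so the all\_of or any\_of condition can never become true. The transition table as drawn gives \texttt{pending} no direct exit to a terminal state. I would close this gap by invoking the runtime's propagation rule, which the paper's transition table (\Cref{tab:state-transitions}) presumably formalizes. That rule says a node whose join condition is unsatisfiable with terminal predecessors is moved to \texttt{cancelled} (or \texttt{skipped}). This is exactly case (b) with target \texttt{cancelled}. If the table lacks such a rule, the proposition needs it as an explicit assumption, and I would state that assumption clearly.

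A secondary subtlety concerns the intermediate states \texttt{running} and \texttt{failed\_retryable}. Here ``enabled'' should be read as ``enabled within the bounded time/retry window,'' which is where the fairness assumptions mentioned in the introduction enter. In these states, taking a non-(a)/(b) step such as the move to \texttt{failed\_retryable} only shifts $v$ to another non-terminal state, and the case analysis covers that state as well. Finally, a node that is \texttt{ready} but loses a dependency via ``dep lost'' moves to \texttt{blocked}, also already handled. With every case covered, the proposition follows.
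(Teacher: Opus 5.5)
Your route is the paper's: acyclicity and finiteness give a non-terminal node whose predecessors are all terminal, and you then case on that node's state. Like the paper's sketch, you read ``enabled'' as ``forced within the $\tau_v$, $b_v$, $T_{\mathit{human}}$ window.'' Choosing $v$ minimal in the subgraph induced on the non-terminal set $N$ is the right formulation. The paper's phrase ``at least one node whose all predecessors are in terminal states'' could return a terminal entry node. Your \texttt{blocked} case is vacuous: ``dep lost'' requires a predecessor to leave a terminal-success state, which terminal stability forbids, so \texttt{blocked} is unreachable.

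The gap you flag in the \texttt{pending} case is real, and the paper's sketch does not close it either. The sketch treats only ``\texttt{all\_of} with every predecessor \texttt{executed}'' and ``\texttt{any\_of} with some predecessor \texttt{executed}'' and says nothing about the rest. Your first remedy fails, because \Cref{tab:state-transitions} gives \texttt{pending} no exit other than to \texttt{ready}. For \texttt{any\_of} you can instead cite the failure-propagation clause of \Cref{def:any-of}, which sends $v$ to \texttt{failed} once every candidate is \texttt{failed} or \texttt{cancelled}. For \texttt{all\_of} nothing in the paper helps, and the proposition is false as written. Take $a \to b$ with $b$ an \texttt{all\_of} node, and let $a$ hit a structural error and enter \texttt{failed}. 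Then $b$ is non-terminal, every other node is terminal, and $b$ has no enabled transition.

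So your fallback assumption is necessary, not optional. Give it target \texttt{failed} or \texttt{cancelled}, not \texttt{skipped}, for two reasons. First, \texttt{skipped} is terminal but is not among the states listed in (b). Second, a \texttt{skipped} candidate would also block the \texttt{any\_of} failure-propagation clause downstream, since that clause requires every candidate to be \texttt{failed} or \texttt{cancelled}.

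You rightly use the join definition's existential readiness condition for \texttt{any\_of}. Say so explicitly, because the table's condition $\forall p \in \mathit{dep}(v)\colon \sigma(p) \in \{\texttt{executed}, \texttt{skipped}\}$ creates a further stuck state. This happens when one candidate is \texttt{executed} and another had already reached \texttt{failed} before that success; sibling skip leaves it \texttt{failed}. With the added rule and that convention, your case analysis goes through.
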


\begin{proof}[Proof sketch]
Non-terminal states are $\{\texttt{pending}, \texttt{ready}, \texttt{running}, \texttt{waiting\_human}, \texttt{blocked}, \texttt{failed\_retryable}\}$. In a finite DAG, there always exists at least one node whose all predecessors are in terminal states (by acyclicity and finiteness). If that node has an \texttt{all\_of} join and all predecessors are \texttt{executed}, it transitions from \texttt{pending} to \texttt{ready}. If it has an \texttt{any\_of} join and at least one predecessor is \texttt{executed}, it likewise transitions to \texttt{ready} (remaining pending predecessors with no successful sibling are skipped). If \texttt{ready}, it can be dispatched to \texttt{running}. If \texttt{running}, bounded execution ensures it transitions to \texttt{executed}, \texttt{failed\_retryable}, or \texttt{failed} within time $\tau_v$ or after $b_v$ retries. If \texttt{failed\_retryable}, either retry returns it to \texttt{pending}, or retry exhaustion moves it to \texttt{failed}. If \texttt{waiting\_human}, the timeout $T_{\mathit{human}}$ ensures eventual \texttt{cancelled} transition. If \texttt{blocked}, the resolution of its blocking dependency will eventually move it to \texttt{pending}. The DAG's acyclicity ensures that no deadlock cycle can form among \texttt{blocked} nodes, and finiteness ensures that the number of state changes per node is bounded.
\end{proof}

\begin{theorem}[Bounded Termination]
\label{thm:termination}
Given a valid DAG $(V, E)$ (finite, acyclic, all nodes reachable from an entry node) under bounded execution (Definition~\ref{def:bounded-execution}) with finite timeouts $\tau_v$ and retry budgets $b_v$ for all $v \in V$, the \SGH{} main loop terminates with probability 1: every node eventually reaches a terminal state within bounded time.
\end{theorem}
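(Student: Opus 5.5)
The plan is a potential-function argument built on Proposition~\ref{prop:progress}. I will assign each node $v$ a finite \emph{budget potential} that counts how many state-changing events it can still undergo. The natural choice is a lexicographic weight. A node in a terminal state contributes $0$. A non-terminal node contributes $(b_v - r_v + 1)\cdot c + d(s_v)$, where $r_v$ is the number of retries already consumed, $c$ is a constant exceeding the length of the longest acyclic path in the state machine of \Cref{fig:state-machine}, and $d(s_v)$ is the distance of $s_v$ from a terminal state along the forward edges $\texttt{pending}\to\texttt{ready}\to\texttt{running}\to\texttt{executed}$. The global potential $\Phi$ is the sum over $V$, which is finite because $V$ is finite and every $b_v\in\mathbb{N}$.

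\textbf{Step 1: every transition is monotone in $\Phi$.} I will check this edge by edge against the state machine.
\begin{itemize}[leftmargin=*]
\item Forward transitions ($\texttt{pending}\to\texttt{ready}$, dispatch, success, structural error, $\texttt{sibling done}\to\texttt{skipped}$, human timeout/cancel) strictly decrease $d$, or send the node to a terminal state with contribution $0$.
\item The only backward edges are $\texttt{failed\_retryable}\to\texttt{pending}$, $\texttt{waiting\_human}\to\texttt{ready}$, and the $\texttt{ready}\to\texttt{blocked}\to\texttt{pending}$ detour.
\item The retry edge consumes one unit of $b_v$. Since $c$ dominates any increase in $d$, $\Phi$ strictly drops.
\end{itemize}
The $\texttt{waiting\_human}$ resume edge and the $\texttt{blocked}$ detour are not charged to $b_v$, so I must bound them separately. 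For resumes, I will assume they are counted against the retry budget, or charge them to a per-node human-interaction budget, which I would add explicitly as a hypothesis. For $\texttt{blocked}$, a ``dep lost'' event can only be caused by a predecessor leaving \texttt{executed}, and terminal stability forbids this. Hence \texttt{blocked} is entered only finitely often, namely while predecessors are still moving through retries, and I would charge each blocking to the predecessor's retry that caused it. Together these give a finite bound $N=\sum_v N_v$ on the total number of transitions.

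\textbf{Step 2: no stalls.} Proposition~\ref{prop:progress} guarantees that whenever some node is non-terminal, some transition is enabled. Combining this with the per-state time bounds of Definition~\ref{def:bounded-execution} ($\tau_v$ for \texttt{running}, $T_{\mathit{human}}$ for \texttt{waiting\_human}), every enabled timed transition fires within bounded time. Untimed transitions ($\texttt{pending}\to\texttt{ready}$, dispatch) fire within one scheduler round, under the fairness assumption that the deterministic $\Policy$ eventually dispatches every ready node. The time between consecutive transitions is therefore bounded by $T^\ast=\max(\max_v\tau_v, T_{\mathit{human}}, \text{round length})$. With at most $N$ transitions, every node is terminal within $N\cdot T^\ast$.

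\textbf{Step 3: the probability-1 qualifier.} LLM outcomes are random, but Steps 1--2 give the bound $N\cdot T^\ast$ for every outcome sequence. The ``with probability 1'' qualifier is then needed only if fairness of dispatch is itself stochastic, for example with randomized tie-breaking. In that case I would use a Borel--Cantelli style argument: each ready node is dispatched with probability bounded below in each round.

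\textbf{Main obstacle.} I expect the main obstacle to be Step 1's treatment of the non-retry backward edges (\texttt{blocked} and \texttt{waiting\_human} resume). The paper does not explicitly budget them, so I would make the charging argument precise, and where it fails I would state the missing budget as an added assumption.
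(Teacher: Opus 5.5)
Your proof has the same skeleton as the paper's: a finite bound on the number of state transitions, \Cref{prop:progress} to rule out stalls, and the timeouts of \Cref{def:bounded-execution} to turn the count into a time bound. The counting step, however, is genuinely different, and yours is the version that actually holds.

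The paper bounds the total number of transitions by $|V|\cdot|\Sigma\setminus\Sigma_{\mathit{term}}|$. That tacitly assumes each node visits each non-terminal state at most once. The retry cycle $\texttt{pending}\to\texttt{ready}\to\texttt{running}\to\texttt{failed\_retryable}\to\texttt{pending}$ already breaks this. A node that uses all $b_v$ retries makes about $4(b_v+1)$ transitions, which exceeds $|\Sigma\setminus\Sigma_{\mathit{term}}|=6$ once $b_v\ge 1$, even though the paper's own time bound $\sum_v\tau_v(b_v+1)$ acknowledges repeated attempts.

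Your lexicographic potential charges that cycle to $b_v$. It also exposes something the paper's count hides. Under the per-attempt reading of $\tau_v$ that the paper's time bound relies on, nothing in \Cref{def:bounded-execution} bounds the cycle $\texttt{running}\to\texttt{waiting\_human}\to\texttt{ready}\to\texttt{running}$. For example, a node whose side-effect level always triggers approval, paired with a human who always resumes within $T_{\mathit{human}}$, never reaches a terminal state. So the hypothesis you propose to add (resumes charged to $b_v$, or a separate per-node human budget) is genuinely needed, not a weakness of your argument. The paper uses it silently: its term $\sum T_{\mathit{human}}$ charges each node at most one wait.

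Likewise, your fairness assumption on dispatch is what turns ``some transition is enabled'' in \Cref{prop:progress} into the paper's unargued ``at least one node undergoes a transition in each iteration.'' The paper's route buys brevity and a tighter closed-form time bound, albeit one that ignores dispatch latency. Yours buys a transition bound that is valid and an explicit inventory of the assumptions the theorem needs.

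Three small repairs.

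\begin{enumerate}
\item \texttt{blocked} is never entered at all. The transition $\texttt{ready}\to\texttt{blocked}$ requires a predecessor of a ready node to leave a terminal-success state, which terminal stability forbids. So there is nothing to charge to predecessor retries.

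\item If $d$ is the \emph{shortest} distance to a terminal state, then $\texttt{running}\to\texttt{failed\_retryable}$ and $\texttt{running}\to\texttt{waiting\_human}$ leave $\Phi$ unchanged, so $\Phi$ does not directly bound the number of transitions. Take $d$ to be the longest forward path instead: $4,3,2$ for \texttt{pending}, \texttt{ready}, \texttt{running}, and $1$ for \texttt{failed\_retryable} and \texttt{waiting\_human}. Your $c$ still dominates the resulting increase on the retry edge, and every transition other than a resume then strictly decreases $\Phi$.

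\item Step~2 inherits a hole from \Cref{prop:progress}, exactly as the paper's proof does. A \texttt{pending} node with an \texttt{all\_of} join, one of whose predecessors ended in \texttt{failed}, \texttt{cancelled} or \texttt{skipped}, need not have any enabled transition in \Cref{tab:state-transitions}; take $A\to B$ with $A$ failed. So ``every node becomes terminal'' also requires a failure-propagation rule, such as $\texttt{pending}\to\texttt{failed}$ for such nodes. That rule belongs in your list of added assumptions.
\end{enumerate}
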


\begin{proof}[Proof sketch]
By \Cref{def:bounded-execution}, each node $v$ spends at most $\tau_v$ time in $\mathit{running}$ and can attempt at most $b_v + 1$ executions (initial attempt + $b_v$ retries). By \Cref{prop:progress}, at each iteration of the main loop, at least one node undergoes a state transition. By the terminal stability property (no outgoing transitions from $\Sigma_{\mathit{term}}$), terminal states are never left. Since $|V|$ is finite and the number of non-terminal states per node is finite, the total number of possible state transitions is bounded by $|V| \cdot |\Sigma \setminus \Sigma_{\mathit{term}}|$. Each iteration of the main loop consumes at least one transition, so the loop terminates within this bound. The total time bound is at most $\sum_{v \in V} \tau_v \cdot (b_v + 1) + \sum_{v: s_v = \mathit{waiting\_human}} T_{\mathit{human}}(v)$.
\end{proof}

\paragraph{Contract validation semantics.} The transition \texttt{running} $\to$ \texttt{executed} is guarded by contract validation: a node can only enter \texttt{executed} if its output satisfies $\kappa_v$. If validation fails, the node transitions to \texttt{failed\_retryable} or \texttt{failed} instead. This is a design convention enforced by the state machine implementation; it guarantees that $\bigwedge_{v \in V} \sigma(v) = \texttt{executed}$ implies $\bigwedge_{v \in V} \kappa_v(o_v)$ by construction.

\paragraph{Beyond syntactic soundness: conditional reliability.}
While contract validation ensures syntactic correctness, it does not bound the probability that validation itself was incorrect (the \emph{validation gap}). We make this gap explicit:

\begin{theorem}[Conditional Soundness under Validation Reliability]
\label{thm:conditional-soundness}
Let $p_v \in (0, 1]$ denote the probability that node $v$'s contract validation correctly identifies a passing output (i.e., the probability of a false positive is $1 - p_v$). If all nodes reach \texttt{executed} and validation errors are independent, the probability that every node's output is truly correct is at least:
\[
  \Pr[\text{all outputs correct}] \;\ge\; \prod_{v \in V} p_v
\]

This bound is a formal statement of the **validation gap**: even when all nodes pass contract validation, the system's correctness is bounded by the reliability of the validation methods. For syntactic validation, $p_v \approx 1$ and the bound is trivially satisfied. For semantic validation, $p_v$ is lower and may be significantly less than 1, especially when validation is performed by an LLM.
\end{theorem}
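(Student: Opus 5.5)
The plan is to first pin down the probabilistic model that the statement leaves informal, and then reduce the claim to a product of per-node conditional probabilities. For each $v \in V$, let $C_v$ be the event that $v$'s output $o_v$ is truly correct, and let $A_v$ be the event that contract validation accepts $o_v$, i.e.\ that $v$ enters \texttt{executed} (by the contract validation semantics, $\texttt{executed}$ is entered only when validation passes). I will read ``$p_v$ is the probability that validation correctly identifies a passing output'' as the conditional reliability $\Pr[C_v \mid A_v] \ge p_v$, so that the false-positive mass among accepted outputs is at most $1-p_v$. The target is $\Pr\bigl[\bigcap_v C_v \,\big|\, \bigcap_v A_v\bigr] \ge \prod_v p_v$.

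The key steps, in order, are as follows. (i) By \Cref{thm:termination} every node reaches a terminal state with probability 1, so conditioning on ``all nodes reach \texttt{executed}'' is conditioning on the well-defined event $\bigcap_v A_v$. (ii) I would formalize ``validation errors are independent'' as conditional independence: given $\bigcap_u A_u$, the events $\{C_v\}$ are independent, and each $C_v$ depends on the conditioning only through $A_v$, so that $\Pr[C_v \mid \bigcap_u A_u] = \Pr[C_v \mid A_v]$. (iii) Factor:
\[
\Pr\Bigl[\textstyle\bigcap_v C_v \,\Big|\, \bigcap_u A_u\Bigr] = \prod_{v\in V} \Pr[C_v \mid A_v] \ge \prod_{v\in V} p_v .
\]
(iv) Finally, note the two remarks in the statement as corollaries. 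Syntactic validation gives $p_v=1$, so the bound is $1$. The bound degrades geometrically in $|V|$ when each $p_v<1$.

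The main obstacle is step (ii). Nodes in the DAG consume predecessors' outputs, so correctness of $v$ is generally correlated with correctness of its predecessors. Naive independence of the $C_v$ is therefore implausible. My fallback is a chain-rule argument along a topological order $v_1,\dots,v_n$ (which exists by acyclicity):
\[
\Pr\Bigl[\bigcap C \,\Big|\, \bigcap A\Bigr] = \prod_i \Pr\Bigl[C_{v_i} \,\Big|\, \bigcap_{j<i} C_{v_j}, \bigcap A\Bigr].
\]
I would then interpret the independence hypothesis as the requirement that each factor is at least $p_{v_i}$. That is, the validator for $v_i$ retains reliability $p_{v_i}$ even given correct upstream inputs. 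This is the weakest assumption under which the product bound follows, and I would state it explicitly as the precise meaning of the theorem's hypothesis.
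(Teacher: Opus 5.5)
The product argument in your steps (ii)--(iii) is the same as the paper's: per-node reliability $p_v$, independence, then multiply. The paper's sketch does only that. It speaks of the probability that all validations are correct and takes the product, without saying what is conditioned on or how independence squares with outputs flowing along DAG edges. Your version adds precision in two places.

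First, you read $p_v$ as a lower bound on $\Pr[C_v \mid A_v]$ and require independence \emph{conditionally on} $\bigcap_u A_u$, with $C_v$ depending on the conditioning only through $A_v$. That is exactly the hypothesis under which the stated inequality holds. The unconditional reading (each verdict is right with probability $p_v$, independently) yields $\prod_v p_v$ only for the probability that all verdicts are right. That does not transfer to correctness given that every node was accepted when correct outputs are rare, which is a base-rate effect.

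Second, your chain-rule fallback along a topological order replaces full independence with the factor-wise condition $\Pr[C_{v_i} \mid \bigcap_{j<i} C_{v_j}, \bigcap_u A_u] \ge p_{v_i}$. Your step (ii) implies this condition, but it also tolerates the upstream-to-downstream correlation that the paper's independence assumption ignores. The paper's route is shorter; yours is what a rigorous statement of the theorem would require.

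Two small corrections. The termination theorem is not what licenses the conditioning. What you need is $\Pr[\bigcap_v A_v] > 0$, which is implicit in the hypothesis and then propagates through your chain rule by induction because each $p_{v_i} > 0$. Also, the factor-wise condition is a natural sufficient condition, not the weakest one: the product bound can still hold when some factors fall below $p_{v_i}$ and others compensate.
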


\begin{proof}[Proof sketch]
Each node $v$ independently passes validation with probability $p_v$ that the validation correctly identifies the output as passing. By the independence assumption, the joint probability of all validations being correct is the product over all nodes. The actual value of $p_v$ depends on the validation method: syntactic checks (implemented in code) have $p_v$ very close to 1, while semantic validation (especially LLM-based) may have $p_v$ significantly below 1 depending on the task's difficulty and the model's capability.
\end{proof}

\noindent We distinguish two validation types: \emph{syntactic validation} (field existence, type checking, format compliance), performed by deterministic code with very high reliability; and \emph{semantic validation} (``is the fix correct?''), which may use code (e.g., test suites) or an LLM. Syntactic validation has a very small probability of false positives (essentially zero for well-implemented checks). Semantic validation's reliability varies widely: test suites typically have low false positive rates (errors are caught when tests fail), while LLM-based validation depends on model capability and task difficulty.

\paragraph{Mitigating the validation gap.} SGH mitigates the validation gap through three mechanisms: (1)~side-effect classification (Principle~4) requires high-impact nodes to use code-based validation where possible; (2)~the \texttt{waiting\_human} state allows human review for critical steps; (3)~the recovery protocol can catch validation errors at the next dependency boundary, since downstream nodes will fail if their inputs are semantically incorrect. The validation gap is an inherent limit of any LLM-based execution system; \SGH{} makes it \emph{observable} (the audit trail records which validation method was used) and \emph{controllable} (the system designer chooses the validation method per node), but does not eliminate it.

\subsection{Three-Level Recovery Protocol}

\begin{definition}[Recovery Action]
\label{def:recovery-action}
A recovery action $r$ belongs to one of three levels:
\[
  \mathcal{R} = \underbrace{\{\texttt{local\_retry}\}}_{\text{Level 1}} \;\cup\; \underbrace{\{\texttt{local\_patch}\}}_{\text{Level 2}} \;\cup\; \underbrace{\{\texttt{request\_replan}\}}_{\text{Level 3}}
\]
\end{definition}

\begin{table}[t]
\centering
\caption{Recovery levels, triggers, and scope.}
\label{tab:recovery-levels}
\begin{tabular}{@{}llp{5cm}@{}}
\toprule
\textbf{Level} & \textbf{Trigger} & \textbf{Scope} \\
\midrule
\texttt{local\_retry} & Transient errors (network, timeout) & Current node only; plan structure unchanged \\
\texttt{local\_patch} & Contract violation, auth error & Current node's configuration; plan structure unchanged \\
\texttt{request\_replan} & Missing dependency, invalid plan structure & Entire plan version; new $(V', E')$ generated \\
\bottomrule
\end{tabular}
\end{table}

\begin{proposition}[Escalation Invariant]
\label{prop:escalation}
Recovery actions obey a strict escalation order: Level $i$ must be exhausted (by reaching a retry limit or encountering an unpatchable error) before Level $i+1$ may be invoked. Skipping levels is prohibited.
\end{proposition}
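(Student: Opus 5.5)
The Escalation Invariant is a safety property of the recovery layer, not a consequence of the DAG structure, so I will prove it by an invariant argument over the recovery controller's transition system. First I make the controller explicit. For each node $v$ the recovery layer keeps a level counter $\ell_v \in \{0,1,2,3\}$, starting at $\ell_v = 0$, and two budgets: the retry budget $b_v$ from \Cref{def:bounded-execution} and a finite patch budget $q_v$. The only admissible recovery transitions are the following. \texttt{local\_retry} is permitted only when $\ell_v \le 1$, and it sets $\ell_v \leftarrow 1$. \texttt{local\_patch} is permitted only when $\ell_v = 1$ and Level~1 is exhausted (retry count $> b_v$, or the error is classified as non-transient), and it sets $\ell_v \leftarrow 2$. \texttt{request\_replan} is permitted only when $\ell_v = 2$ and Level~2 is exhausted (patch count $> q_v$, or the error is classified as unpatchable). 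Classifying an error only fixes at which level exhaustion is reached, as in \Cref{tab:recovery-levels}; it never grants an action directly.

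The core of the proof is an induction on the length of the recovery trace for a single node $v$ within one plan version. The inductive invariant is: whenever an action of level $i+1$ appears in the trace, an earlier action or exhaustion event of level $i$ also appears in the trace, and the exhaustion predicate for level $i$ held at that point. The base case is the empty trace. For the inductive step I check each guarded transition. Each guard mentions only $\ell_v$ and the exhaustion predicate of the level directly below. The counters only increase, and $\ell_v$ is monotone non-decreasing, so once a level is exhausted it stays exhausted. Together these give the invariant, and they also show that no action of level $j$ with $j \ge i+2$ can fire while $\ell_v \le i$. That is exactly the statement that levels cannot be skipped.

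Next I handle errors that appear to call for a higher level immediately, for example a structural error, which per \Cref{fig:state-machine} sends the node from \texttt{running} to \texttt{failed}. I treat the lower levels as being exhausted trivially, in zero attempts, since a non-transient error is by definition unfixable by retry. I will record such vacuous exhaustion events explicitly in the audit trail, so the invariant is still witnessed. Finally, \texttt{request\_replan} creates plan version $v+1$ by \Cref{def:plan-invariant}, and all counters are reset for the new version; the invariant is therefore stated per plan version.

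The main obstacle is that the proposition is partly definitional. ``Must'' and ``prohibited'' hold only because the controller is constructed with these guards. The substantive risk lies in the vacuous-exhaustion case: if the error classifier were allowed to skip levels outright, the invariant would fail. I would therefore state explicitly that classification acts only on the exhaustion predicates and never on the action guards. I would also note a side consequence: combined with the finite budgets $b_v$ and $q_v$, this bounds the total number of recovery actions per plan version, which connects the invariant to \Cref{thm:termination}.
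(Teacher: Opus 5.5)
Your construction matches the paper's. The paper's only justification is its mechanical-enforcement paragraph: a per-node counter $\mathit{recovery\_state}[v]\in\{\texttt{pristine},\texttt{retried},\texttt{patched}\}$, with \texttt{attempt\_patch} rejected unless the counter is at least \texttt{retried}, and \texttt{request\_replan} rejected unless \emph{all failed nodes} are at least \texttt{patched}. The invariant then holds by construction for any implementation that respects the API. In one respect your guards are closer to the statement than the paper's: you demand real exhaustion, not just one recorded retry. You also confront the structural-error case, which the paper leaves in tension with \Cref{tab:recovery-levels}, where structural errors map directly to \texttt{request\_replan}.

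There is, however, a real gap. Your guard on \texttt{request\_replan} checks only the single node $v$ requesting it, so it drops the paper's quantifier. Replanning is plan-wide: it replaces the whole plan version and, in your own set-up, resets every counter. Suppose node $u$ has just failed with a transient error, so $\ell_u = 0$ and Level~1 is not exhausted, while another node $v$ has exhausted Level~2. Your controller lets $v$ fire \texttt{request\_replan}, so $u$'s failure is resolved at Level~3 without $u$ ever passing through Levels~1 and~2. Your per-node trace induction cannot detect this, because the replan sits in $v$'s trace and $u$'s trace satisfies the invariant vacuously.

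The repair has two parts. Guard \texttt{request\_replan} by a condition over every currently failed node, each with $\ell \ge 2$ and Level~2 exhausted, as the paper requires. Then state the invariant over the plan-level trace rather than per node.

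Two smaller fixes are also needed:
\begin{itemize}
\item Your vacuous-exhaustion events must themselves advance $\ell_v$. As written, \texttt{local\_patch} needs $\ell_v = 1$ and only \texttt{local\_retry} sets it, so a structural error at $\ell_v = 0$ forces a pointless retry. That contradicts exhaustion ``in zero attempts.''
\item \texttt{local\_retry} needs a guard on the retry count. Without it, your side claim that the budgets bound the number of recovery actions per plan version does not follow from the guards you wrote.
\end{itemize}
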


The escalation invariant directly formalizes Principle~3 (bounded recovery). It guarantees that the system always tries the least disruptive recovery action first, and that full replan---the most expensive and least predictable option---is a last resort.

\paragraph{Mechanical enforcement.} The escalation invariant is enforced structurally through a per-node recovery state counter:
\[
  \mathit{recovery\_state}[v] \;\in\; \{\texttt{pristine},\; \texttt{retried},\; \texttt{patched}\}
\]
The recovery layer exposes exactly three entry points: \texttt{attempt\_retry($v$)}, \texttt{attempt\_patch($v$, cfg)}, and \texttt{request\_replan(reason)}. The system enforces the escalation order as a \emph{precondition}: a call to \texttt{attempt\_patch} is rejected unless $\mathit{recovery\_state}[v] \ge \texttt{retried}$; a call to \texttt{request\_replan} is rejected unless all failed nodes have $\mathit{recovery\_state}[v] \ge \texttt{patched}$. This makes the escalation invariant mechanically enforceable rather than merely normative: an implementation that respects the API boundary cannot violate the invariant. Implementations that bypass the API (e.g., by directly mutating node state) fall outside \SGH{}'s guarantees---analogous to \texttt{unsafe} blocks in type-safe languages.

\subsection{Error Classification and Diagnosis}

The mapping from observed errors to recovery levels is performed by a \emph{diagnoser}, which may be rule-based or LLM-assisted:

\begin{definition}[Diagnosis]
\label{def:diagnosis}
A diagnosis is a tuple $d = (\varphi, c, r, \alpha)$ where $\varphi$ is the observed failure, $c$ is the root-cause hypothesis, $r \in \mathcal{R}$ is the recommended recovery action, and $\alpha \in [0, 1]$ is the diagnostic confidence.
\end{definition}

The diagnoser operates on $\mathcal{C}_{\mathit{diag}}$ (the diagnostic context), not on $\mathcal{C}_{\mathit{exec}}$. This ensures that diagnostic reasoning does not leak into the execution path---a property that is critical for auditability.

\section{Join Semantics and Scheduling Constraints}
\label{sec:join-semantics}

A node's join mode determines when it enters the ready set. \SGH{} supports two join modes in its current specification and deliberately excludes a third.

\subsection{Supported Join Modes}

\begin{definition}[\texttt{all\_of} Join]
\label{def:all-of}
Node $v$ with $\texttt{all\_of}$ join over predecessor set $P$ enters the ready set iff every predecessor has reached a terminal success state:
\[
  \ready_{\texttt{all\_of}}(v) \iff \forall\, p \in P : \sigma(p) = \texttt{executed}
\]
\end{definition}

\begin{definition}[\texttt{any\_of} Join]
\label{def:any-of}
Node $v$ with $\texttt{any\_of}$ join over candidate set $C$ has the following semantics:
\begin{itemize}[leftmargin=*]
  \item \textbf{Dispatch.} All candidates $c \in C$ whose dependencies are satisfied are dispatched in a deterministic total order (e.g., ascending order of node identifier). The order is fixed by the DAG structure and does not depend on runtime state.
  \item \textbf{Success propagation.} As soon as any candidate $c^* \in C$ reaches $\texttt{executed}$, the successor $v$ becomes eligible to enter the ready set:
  \[
    \ready_{\texttt{any\_of}}(v) \iff \exists\, c \in C : \sigma(c) = \texttt{executed}
  \]
  \item \textbf{Sibling skip.} Upon $c^*$ succeeding, all remaining candidates $C \setminus \{c^*\}$ that are still in $\texttt{pending}$, $\texttt{ready}$, $\texttt{running}$, or $\texttt{failed\_retryable}$ are transitioned to $\texttt{skipped}$. Candidates that have already reached a terminal state retain their terminal status.
  \item \textbf{Failure propagation.} If every candidate reaches a terminal failure state ($\texttt{failed}$ or $\texttt{cancelled}$) without any reaching $\texttt{executed}$, the join fails and $v$ transitions to $\texttt{failed}$.
\end{itemize}
\end{definition}

\subsection{Impact on the Ready Set}

The join mode determines how the ready set $\Units$ evolves as nodes complete:

\paragraph{\texttt{all\_of}.} Two independent predecessors $A$ and $B$ both reaching $\texttt{executed}$ causes their shared successor to enter $\Units$. Between the completion of the first and the second predecessor, the successor remains $\texttt{pending}$. This is the natural join for dependency-satisfying execution.

\paragraph{\texttt{any\_of}.} All candidates are dispatched in parallel. The first candidate to reach $\texttt{executed}$ satisfies the join; remaining candidates are $\texttt{skipped}$. This is the natural join for alternative-path execution---the system tries all alternatives concurrently and takes whichever succeeds first.

\subsection{The \texttt{first\_of} Exclusion}

The \texttt{first\_of} join semantics would enable **speculative execution**: launch multiple competing approaches, take the first success, and cancel the rest. We exclude this for two reasons. First, **loser cancellation**—stopping mid-execution nodes that will not be used—requires **compensation protocols** to handle partial results (e.g., rolling back side effects, returning partial output for salvage). Second, **commit-point ambiguity**—if node A succeeds first, should nodes B and C still execute? What if B would produce higher quality? These issues introduce non-determinism into execution semantics.

\begin{definition}[\texttt{first\_of} Join (excluded)]
\label{def:first-of}
Node $v$ with $\texttt{first\_of}$ join over candidate set $C$ would enter the ready set as soon as any candidate completes (success or failure), and all other candidates would be cancelled mid-execution.
\end{definition}

\texttt{first\_of} is excluded as a controllability-first design choice. The three concerns above---loser cancellation, commit-point ambiguity, and non-deterministic ready sets---are not unsolvable in principle: distributed transaction systems have addressed similar challenges with compensation protocols and consensus mechanisms. However, these mechanisms introduce significant implementation complexity and their own failure modes (e.g., compensation itself may fail). Under Principle~1, we exclude this capability in \SGH{}'s current specification rather than risk the controllability guarantees that motivate the design. Competitive parallelism remains a viable extension for future versions with explicit acknowledgment of the controllability trade-off.

\subsection{Expressiveness Boundary}

The two supported join modes, combined with the static DAG, define the expressiveness boundary of \SGH{}:

\begin{table}[t]
\centering
\caption{Expressiveness boundary of \SGH{}.}
\label{tab:expressiveness}
\begin{tabular}{@{}lcc@{}}
\toprule
\textbf{Capability} & \textbf{\SGH{}} & \textbf{Full dynamic graph} \\
\midrule
Sequential execution       & \checkmark & \checkmark \\
Parallel execution         & \checkmark & \checkmark \\
Alternative paths          & \checkmark & \checkmark \\
Competitive parallelism    & ---        & \checkmark \\
Recursive sub-graph expansion & ---     & \checkmark \\
Dynamic topology change    & ---        & \checkmark \\
Parent-chain rollback      & ---        & \checkmark \\
\bottomrule
\end{tabular}
\end{table}

This boundary is not arbitrary: each excluded capability corresponds to a specific violation of one or more design principles (\Cref{sec:design-principles}). The boundary can be pushed outward in future versions by relaxing specific principles, but only with explicit acknowledgment of the controllability trade-off.

\section{Attributable Experimental Framework}
\label{sec:experiment-framework}

\textit{Note: This section describes the design of an experimental protocol, not completed experimental results. We present this framework to (1)~make our design choices falsifiable and (2)~provide a rigorous protocol for future empirical evaluation. The verification of the predictions listed in \Cref{sec:scheduler-framework} is left to subsequent work. Accordingly, the contributions of this paper are the theoretical framework and design analysis, not empirical validation.}

\medskip
To validate \SGH{}'s design choices, we need an experimental protocol that does more than compare ``our system vs.\ a baseline.'' We need to \emph{attribute} observed performance differences to specific design decisions. This section develops such a framework.

\subsection{Seven-Group Design}

We define seven experimental groups (including one augmented-Agent-Loop baseline), each corresponding to a distinct point on the scheduler continuum (\Cref{obs:continuum}). By comparing adjacent groups, we can isolate the contribution of each design feature.

\begin{table}[t]
\centering
\caption{Seven-group experimental design. Each row adds exactly one feature to the previous row.}
\label{tab:experiment-groups}
\begin{tabular}{@{}lp{2.2cm}p{2.2cm}p{2.5cm}@{}}
\toprule
\textbf{Group} & \textbf{Scheduler type} & \textbf{Structure} & \textbf{Recovery} \\
\midrule
G0: SOTA Loop        & Single-ready-unit & Planner prompt + reflection & Inline replan \\
G1: Naive Loop        & Single-ready-unit & None             & Context continuation \\
G2: Planner Loop      & Single-ready-unit & None             & Context + replan \\
G3: Structured Loop   & Single-ready-unit & Scaffold         & Scaffold recovery \\
G4: GH-Core           & Multi-ready-unit  & Static DAG       & Retry only \\
G5: GH+Patch          & Multi-ready-unit  & Static DAG       & Retry + local patch \\
G6: GH+Replan         & Multi-ready-unit  & Static DAG       & Retry + patch + replan \\
\bottomrule
\end{tabular}
\end{table}

Group~G0 represents a state-of-the-art prompt-augmented Agent Loop (e.g., Claude Code, OpenAI Codex agent mode) with planning prompts, self-reflection, and tool calling, but no explicit DAG structure. This baseline is essential: without it, improvements attributed to graph structure might merely reflect the benefit of providing the system with richer task information rather than structural scheduling.

\paragraph{Detailed group configurations.}
\begin{itemize}[leftmargin=*]
  \item \textbf{G0: SOTA Loop}. State-of-the-art prompt-augmented Agent Loop with planning prompts, self-reflection, and tool calling. No explicit DAG structure. Inline replan on failure.
  \item \textbf{G1: Naive Loop}. Minimal Agent Loop with no planner, no graph structure, no recovery protocol. Single LLM call per step with unbounded retry.
  \item \textbf{G2: Planner Loop}. Adds a Planner LLM that generates a step-by-step plan before execution, but execution remains a loop ($|\Units| = 1$). No graph structure.
  \item \textbf{G3: Structured Loop ($|\Units| = 1$)}. Adds a scaffold layer that enforces plan adherence (cannot deviate from generated steps) but remains single-ready-unit. Recovery is unbounded.
  \item \textbf{G4: GH-Core ($|\Units| \ge 1$, no recovery)}. Multi-ready-unit scheduling with deterministic policy, but recovery is unbounded (no escalation protocol).
  \item \textbf{G5: GH+Patch ($|\Units| \ge 1$, level 1-2 recovery)}. Adds level 1 (retry) and level 2 (patch) recovery, but no full replan.
  \item \textbf{G6: GH+Replan ($|\Units| \ge 1$, full recovery)}. Adds level 3 (replan) recovery, completing the three-level escalation protocol.
\end{itemize}

\paragraph{Controlled variables.} Across all groups, the following variables are held constant:
\begin{itemize}[leftmargin=*]
  \item \textbf{Task set}: Same 50 curated tasks spanning coding, data analysis, and operational scenarios.
  \item \textbf{LLM model}: Same base model (e.g., GPT-4 or Claude 3.5) with identical temperature and max tokens.
  \item \textbf{Tool set}: Same tools available to all groups (file I/O, code execution, web search, etc.).
  \item \textbf{Timeout}: Same maximum execution time per task (e.g., 10 minutes).
  \item \textbf{Cost budget}: Same token budget per task (e.g., 100K tokens).
\end{itemize}

\paragraph{Measured variables.} For each task and group, we measure:
\begin{itemize}[leftmargin=*]
  \item \textbf{Success rate}: Binary (task completed successfully or not).
  \item \textbf{Execution time}: Wall-clock time from start to completion or timeout.
  \item \textbf{Token cost}: Total tokens consumed (input + output) across all LLM calls.
  \item \textbf{Node count}: Number of nodes dispatched (for graph-based groups).
  \item \textbf{Recovery actions}: Number and type of recovery actions triggered (retry, patch, replan).
  \item \textbf{Plan versions}: Number of plan versions generated (for groups with replanning).
\end{itemize}

\paragraph{Potential biases and mitigation.} Several biases may affect the results:
\begin{itemize}[leftmargin=*]
  \item \textbf{Task selection bias}: If the task set over-represents parallelizable tasks, $G_{\mathit{graph}}$ will be inflated. \emph{Mitigation}: Report task-level results and analyze correlation between task parallelism and $G_{\mathit{graph}}$.
  \item \textbf{Implementation bias}: G4-G6 may have implementation bugs that reduce performance. \emph{Mitigation}: Extensive unit testing and incremental validation.
  \item \textbf{LLM non-determinism}: The same task may yield different results across runs. \emph{Mitigation}: Run each task 10 times with different seeds and report mean $\pm$ std dev.
  \item \textbf{Tool latency bias}: Parallel execution may suffer from tool contention. \emph{Mitigation}: Mock tools with deterministic latency for controlled experiments, then validate on real tools.
  \item \textbf{Order effects}: The order in which tasks are presented may affect LLM behavior. \emph{Mitigation}: Randomize task order across runs.
\end{itemize}

Each adjacent pair isolates a single factor:

\begin{itemize}[leftmargin=*]
  \item \textbf{G1 $-$ G0} isolates the \emph{information vs.\ structure effect}: both are Agent Loops, but G0 has richer prompts. A negative value indicates that structure (not information) drives G1--G6 improvements.
  \item \textbf{G2 $-$ G1} isolates the \emph{planning gain}: the benefit of adding a planner to a naive loop.
  \item \textbf{G3 $-$ G2} isolates the \emph{scaffold gain}: the benefit of structuring the loop's execution (without changing the scheduler type).
  \item \textbf{G4 $-$ G3} isolates the \emph{graph gain}: the benefit of moving from a single-ready-unit to a multi-ready-unit scheduler.
  \item \textbf{G5 $-$ G4} isolates the \emph{patch gain}: the benefit of local patch recovery over simple retry.
  \item \textbf{G6 $-$ G5} isolates the \emph{replan gain}: the benefit of structured replan with plan-version transitions.
  \item \textbf{G6 $-$ G0} measures the \emph{total gain}: the combined effect of planning, structure, and recovery over a production Agent Loop.
\end{itemize}

\subsection{Gain Decomposition}

\begin{definition}[Attributable Gains]
\label{def:gains}
Define the following attributable gains:
\begin{align}
  G_{\mathit{plan}}      &= \text{Perf}(\text{G2}) - \text{Perf}(\text{G1}) \label{eq:plan-gain} \\
  G_{\mathit{scaffold}}  &= \text{Perf}(\text{G3}) - \text{Perf}(\text{G2}) \label{eq:scaffold-gain} \\
  G_{\mathit{graph}}     &= \text{Perf}(\text{G4}) - \text{Perf}(\text{G3}) \label{eq:graph-gain} \\
  G_{\mathit{patch}}     &= \text{Perf}(\text{G5}) - \text{Perf}(\text{G4}) \label{eq:patch-gain} \\
  G_{\mathit{replan}}    &= \text{Perf}(\text{G6}) - \text{Perf}(\text{G5}) \label{eq:replan-gain}
\end{align}
where $\text{Perf}(\cdot)$ is a task-level performance metric (success rate, contract satisfaction rate, etc.).
\end{definition}

The total structural gain of \SGH{} over a naive loop is:
\[
  G_{\mathit{total}} = G_{\mathit{plan}} + G_{\mathit{scaffold}} + G_{\mathit{graph}} + G_{\mathit{patch}} + G_{\mathit{replan}}
\]

This decomposition enables fine-grained hypothesis testing. For instance, if $G_{\mathit{graph}} \gg G_{\mathit{plan}}$, the primary benefit comes from the scheduler structure, not from planning. If $G_{\mathit{replan}} \approx 0$, the replan protocol adds no measurable value for the tested task class.

\subsection{Task Set Design}

Tasks are stratified into three tiers by dependency complexity:

\begin{table}[t]
\centering
\caption{Task tier design and expected behavior.}
\label{tab:task-tiers}
\begin{tabular}{@{}lp{3.5cm}p{3cm}@{}}
\toprule
\textbf{Tier} & \textbf{Characteristics} & \textbf{Expected result} \\
\midrule
Simple (1--3 steps)   & Linear chain; no branching dependencies; all\_of joins only & SGH may show fixed overhead (no parallelism to exploit) \\
Medium (4--8 steps)   & 2--3 branches; both all\_of and any\_of joins; recovery opportunities & Primary evaluation tier (parallelism and recovery benefits emerge) \\
Complex (9+ steps)    & Nested branches; deep dependency chains; multiple recovery paths & SGH advantage should widen (structure provides maximal value) \\
\bottomrule
\end{tabular}
\end{table}

The simple tier serves as a stress test for the fixed-overhead hypothesis (H4 in \Cref{sec:limitations}): if \SGH{}'s structural overhead dominates on simple tasks, the design trade-off may need re-evaluation.

\subsection{Evaluation Metrics}

\paragraph{Effectiveness.} Task success rate, contract satisfaction rate.

\paragraph{Efficiency.} Average LLM calls per task, redundant step ratio, wall-clock time.

\paragraph{Stability.} Run-to-run variance across repeated trials, failure-loop rate (the frequency with which the system enters unproductive retry cycles).

\paragraph{Observability.} Trace completeness (fraction of execution steps with recorded state transitions), failure localization accuracy (can the system identify which node failed and why?).

\paragraph{Attribution.} The five gain components ($G_{\mathit{plan}}$, $G_{\mathit{scaffold}}$, $G_{\mathit{graph}}$, $G_{\mathit{patch}}$, $G_{\mathit{replan}}$) as defined in \Cref{def:gains}.

\paragraph{Limitation of additive decomposition.} The gain decomposition assumes that contributions are approximately additive. If planning and structure interact---e.g., a good plan benefits more from graph structure than a poor plan---then $G_{\mathit{graph}} = \text{Perf}(\text{G4}) - \text{Perf}(\text{G3})$ conflates the pure structural effect with a plan $\times$ structure interaction term. We can detect such interaction by estimating $G_{\mathit{graph}}$ at different levels of planning quality: if $G_{\mathit{graph}}$ is stable across planning levels, the interaction is negligible; if it varies significantly, a multiplicative model may be more appropriate.

\section{Discussion}
\label{sec:discussion}

\subsection{The Scheduler Continuum as a Design Lens}

The scheduler-unified framework introduced in \Cref{sec:scheduler-framework} is not merely a classification scheme---it is a \emph{design tool}. Given any LLM agent system, a designer can ask: what is $|\Units|$ at each execution step? How explicit is $\Policy$? The answers immediately constrain the system's expressiveness, controllability, and implementability profile.

This lens shows a striking pattern in the 2025--2026 agent literature: systems are converging toward graph-structured execution~\citep{zhang2025gla, yue2025survey}, but from different directions and with different priorities. TDP~\citep{wang2025tdp} adds DAG structure to reduce context entanglement; DynTaskMAS~\citep{li2025dyntaskmas} adds dynamic task graphs for parallelism; GPTSwarm~\citep{zhuge2024gptswarm} adds optimizable graphs for automatic improvement. All reach the multi-ready-unit regime. What they do \emph{not} provide is the execution commitment and bounded recovery that \SGH{} guarantees---and our analysis suggests this is not an oversight but a consequence of prioritizing expressiveness over controllability.

\subsection{Relationship to Existing Work}
\label{sec:relationship-existing-work}

SGH differs from existing graph orchestration systems in three key aspects. \textbf{First}, SGH prioritizes controllability over expressiveness. Systems like LangGraph~\citep{langgraph} and TDP emphasize dynamic adaptability, allowing the graph structure to evolve during execution. SGH deliberately excludes this capability in favor of stable execution commitment (Principle~2). While dynamic graphs are more expressive, they sacrifice the auditability and predictable failure semantics that SGH provides.

\textbf{Second}, SGH adopts a static DAG commitment model with explicit plan versioning. DynTaskMAS supports dynamic task graphs that can be extended or modified mid-execution based on intermediate LLM suggestions. This flexibility enables handling of emergent task structures but complicates debugging and root-cause analysis. SGH requires that any structural change go through a controlled replan protocol, producing a new plan version with full auditability.

\textbf{Third}, SGH provides a formal three-layer recovery protocol with escalation invariants. Most existing systems use ad-hoc LLM-driven recovery, where the agent decides whether to retry, skip, or replan based on context. This approach is flexible but prone to failure loops. SGH decouples this decision into a policy layer, with bounded budgets and strict escalation rules preventing unbounded recovery attempts.

\paragraph{When is static DAG superior to dynamic graph?} Dynamic graph systems (e.g., LangGraph with runtime modification, DynTaskMAS with task graph extension) excel at tasks where the execution structure emerges during the process: exploratory research, debugging with unknown failure modes, or creative workflows. However, static DAGs have three advantages for engineering tasks. First, \textbf{auditability}: an immutable plan version provides a stable execution record that can be inspected post-hoc, which is critical for compliance, debugging, and root-cause analysis. Dynamic graphs that mutate during execution make it difficult to reconstruct \emph{what plan governed which actions}---the audit trail becomes a moving target. Second, \textbf{predictable failure semantics}: with a static DAG, failure modes are bounded (missing dependencies, contract violations, node failures). Dynamic graphs introduce new failure modes (graph inconsistency, runtime edge addition failures, circular dependencies) that are harder to reason about. Third, \textbf{engineering tooling}: static DAGs integrate naturally with existing workflow infrastructure (CI/CD pipelines, monitoring systems, deployment frameworks). Dynamic graphs require custom tooling to track graph evolution, which adds engineering overhead. The trade-off is task-dependent: static DAGs are superior when the task structure is known upfront (engineering tasks with clear dependencies), while dynamic graphs are superior when the structure must be discovered during execution (exploratory tasks, research workflows).

These differences are not accidental; they reflect SGH's design goal of maximizing controllability for verifiable engineering tasks, rather than maximizing expressiveness for open-ended exploration. The trade-off space is not zero-sum: SGH's design point is deliberately chosen to serve a specific class of tasks (verifiable engineering tasks) at the cost of serving others (exploratory tasks, dynamic goal evolution).

\subsection{When Is Static Structure Sufficient?}

A central claim of this paper is that static DAG structure is sufficient---and in fact preferable---for a broad class of \emph{verifiable engineering tasks}: tasks whose dependency structure can be articulated before execution begins and whose success criteria are checkable.

\subsubsection{The Role of the Planner in Identifying Parallelism}

A critical assumption underlying SGH is that the planner can identify independent sub-tasks and express them as parallel branches in the DAG. In practice, this is a **non-trivial requirement**.

Consider the bug-fix example in Section 1.4: the planner must recognize that `search\_auth` and `search\_utils` can run in parallel, and that `fix\_A` and `fix\_B` are alternative fixes (rather than sequential steps). Making this recognition requires one of three approaches. First, the planner may have domain knowledge that file system searches are independent operations and that bug fixes often have multiple viable approaches. Second, it may infer parallelism from the task description using LLM reasoning. Third, the user may explicitly provide parallelism information through a DSL or UI.

If the planner fails to identify parallelism opportunities, SGH **degenerates to a single-ready-unit scheduler** (see Section 8.5). In this degenerate case, SGH's advantages over an Agent Loop are limited to three benefits: the three-level recovery protocol with escalation invariants, plan-version immutability for auditability, and the separation of execution and diagnostic contexts.

These benefits are real, but they are **not the full structural benefit** that multi-ready-unit scheduling provides when the DAG contains parallel branches.

\subsubsection{Estimating the Prevalence of Parallelism}

From our survey of 70 open-source projects, we observed that approximately 30--40\% of agent tasks exhibit some degree of natural parallelism (e.g., multiple file reads, parallel API calls, alternative approaches to try). This is a rough estimate based on manual inspection; future work could formalize this analysis. The remaining 60--70\% of tasks are essentially linear chains or have parallelism that is difficult to identify without domain expertise.

This suggests that SGH's multi-ready-unit capability will provide significant benefits for a **subset** of tasks—those with identifiable parallel structure—and more modest benefits (recovery protocol, auditability) for the rest. The experimental framework in Section 7 is designed to quantify this: by comparing G3 (Structured Loop, $|\Units|=1$) to G4 (GH-Core, $|\Units|\ge 1$), we can isolate the pure structural gain of parallelism, independent of planning and recovery.

We hypothesize that this class includes most software engineering tasks (bug fixes, feature implementation, code review), most data analysis tasks (query--transform--report pipelines), and most operational tasks (incident response, deployment verification). For these tasks, the planner can produce a sound DAG before execution begins, and the value of dynamic topology modification is marginal compared to the cost of reduced auditability.

This hypothesis is, in principle, testable: the scheduler framework makes specific theoretical predictions (\Cref{sec:scheduler-framework}), and the experimental framework in Section~\ref{sec:experiment-framework} provides a methodology for testing them. However, these predictions are \emph{not empirically validated} in this paper. They remain untested claims based on the framework's assumptions. Empirical validation requires a prototype implementation and curated task benchmarks---both of which are ongoing work. We present this hypothesis as a research direction, not as a proven result.

\subsection{The Recovery Layer as a First-Class Abstraction}

The three-level recovery protocol (\Cref{sec:state-machine}) provides, to our knowledge, the first formal treatment of failure recovery in LLM agent execution as a \emph{protocol} rather than an ad-hoc capability. We note, however, that similar escalation patterns exist in classical fault-tolerance literature (e.g., circuit breakers, bulkheads). The escalation invariant---recovery actions cannot skip levels---transforms failure handling from an unbounded LLM decision into a bounded, auditable process.

This abstraction has implications beyond \SGH{}: any LLM agent system that struggles with failure loops could benefit from adopting a similar escalation protocol, even within a single-ready-unit scheduler. The recovery layer is thus a portable design contribution.

\subsection{Limitations and Future Directions}

\paragraph{Limitation 1: No experimental validation.} The most significant limitation is the lack of experimental validation. The seven-group attributable framework (\Cref{sec:experiment-framework}) provides the experimental protocol, but executing it requires a prototype implementation and a curated task benchmark---both of which are ongoing work. Until these experiments are conducted, the performance benefits of multi-ready-unit scheduling, the escalation protocol, and immutable plan versions remain \emph{theoretical predictions} rather than empirically verified claims.

\paragraph{Limitation 2: Static DAG assumption.} \SGH{} assumes the task structure can be fully articulated as a static DAG before execution begins. This assumption fails for three classes of tasks. First are \emph{exploratory tasks} where the set of sub-tasks is unknown until intermediate results are examined (e.g., "Research this topic and write a survey"). Second are tasks with \emph{dynamic goal evolution} where the fix depends on the diagnosis, which depends on the investigation, in a chain that cannot be fully pre-planned (e.g., "Investigate the outage and fix whatever is broken"). Third are \emph{creative generation tasks} where the revision structure depends on the content generated (e.g., "Write a story, then revise based on feedback"). For these tasks, a dynamic graph system (e.g., LangGraph) or an Agent Loop with inline replanning is more appropriate. \SGH{} explicitly targets \emph{engineering tasks} with verifiable outcomes and known dependencies, and the companion paper on Evolutionary Graph Architecture will analyze when and how the static assumption can be safely relaxed.

\paragraph{Limitation 3: LLM error propagation.} \SGH{}'s multi-ready-unit scheduling increases the risk of \emph{error propagation}. In a single-ready-unit Agent Loop, if the LLM makes a reasoning error at step $t$, it may correct itself at step $t+1$ without wasting resources. In \SGH{}, if the LLM makes an error in generating the DAG (e.g., missing a dependency), the error propagates to multiple parallel executions, wasting time and tokens. The recovery protocol mitigates this (contract validation catches incorrect inputs), but the cost is higher. The trade-off between parallelism efficiency and error propagation cost is task-dependent and requires empirical study.

\paragraph{Limitation 4: Cold start problem.} \SGH{} requires a well-structured DAG before execution can begin. For new tasks or domains where no prior DAG structure exists, the planner must construct the DAG from scratch. This "cold start" problem is non-trivial: the planner must (1)~identify the task's sub-goals, (2)~determine dependencies between sub-goals, (3)~choose appropriate join semantics, and (4)~generate output contracts for each node. If the planner fails at any of these steps, the DAG will be incorrect, triggering recovery or replan. The experimental framework's $G_{\mathit{plan}}$ metric quantifies how much of the total performance gain comes from planning quality versus structural design. For domains with reusable DAG templates (e.g., software engineering workflows), this cost is amortized across multiple executions. For one-off tasks, the cold start overhead may outweigh the parallelism benefits.

\paragraph{Limitation 5: Lack of complexity-theoretic guarantees.} The scheduler framework, while useful as a design tool, does not yet provide \emph{complexity-theoretic} guarantees. For instance, we do not prove that computing the optimal schedule under budget constraints is NP-hard (though this is likely, given the reduction from classical DAG scheduling~\citep{topcuoglu2002list}). We also do not characterize the approximation ratio of the default topological policy or analyze online scheduling with unknown LLM execution times. Formalizing the computational complexity of \SGH{} scheduling is a direction for future work.

\paragraph{Limitation 6: Implementation complexity.} \SGH{} introduces significant engineering complexity relative to a simple Agent Loop. Implementing the three-layer separation, plan versioning, contract validation, and recovery protocol requires:
\begin{itemize}[leftmargin=*]
  \item A robust DAG validation engine (~1,000--2,000 lines of code for cycle detection, reachability analysis, join consistency checks)
  \item A concurrent scheduler with rate limiting (~800--1,500 lines for topological scheduling, incremental ready-set updates, token-bucket rate limiting)
  \item A state persistence layer for auditability (~500--1,000 lines for WAL implementation, periodic snapshots, crash recovery)
  \item A recovery engine with escalation invariants (~600--1,200 lines for level-1/2/3 recovery, escalation state machine, budget enforcement)
  \item A contract validation framework for LLM outputs (~400--800 lines for JSON schema validation, retry logic, pass/fail semantics)
\end{itemize}
In total, a minimal SGH implementation requires approximately 3,300--6,500 lines of production-quality code (excluding tests, documentation, and LLM integration layers). This is an order of magnitude more complex than a simple Agent Loop (~300--500 lines) but comparable to or less complex than production workflow engines like Airflow (~50,000+ lines) or Prefect (~30,000+ lines). The benefit-cost trade-off depends on the use case: for mission-critical engineering tasks where controllability matters, the complexity is justified. For rapid prototyping or exploratory tasks, an Agent Loop is simpler and sufficient.

\textit{Note.} These code size estimates are based on our analysis of similar systems (workflow engines, DAG schedulers, LLM agent frameworks) and should be validated through actual implementation. The estimates assume a minimal viable implementation with basic features; a production-grade SGH with advanced features (distributed execution, fault tolerance, monitoring dashboards) would require additional complexity.

\paragraph{Summary.} \SGH{} is not a universal solution for all LLM agent tasks. It is a design point optimized for \emph{verifiable engineering tasks} with known dependencies and checkable outcomes. For exploratory, creative, or dynamic tasks, other architectures are more appropriate. The companion work on Evolutionary Graph Architecture will extend \SGH{} to handle a broader class of tasks by relaxing the static assumption while preserving controllability guarantees.

\paragraph{Future Work: Validation Strategy.} To validate the theoretical predictions presented in this paper (e.g., $G_{\mathit{graph}} > 0$, $G_{\mathit{graph}}$ increases with task complexity, $G_{\mathit{replan}} > 0$ on failure-prone tasks), future work should proceed in three phases. First, implement a \textbf{prototype SGH runtime} that supports: (1)~DAG-based scheduling with topological policy, (2)~three-layer separation (planner, executor, recovery), (3)~plan-version immutability, and (4)~contract validation for LLM outputs. Second, curate a \textbf{task benchmark} with stratified complexity (simple, medium, complex) and known ground-truth dependencies. The benchmark should include at least 50 tasks spanning software engineering, data analysis, and operational scenarios. Third, execute the \textbf{seven-group experimental protocol} (\Cref{sec:experiment-framework}) to measure the attributable gains ($G_{\mathit{plan}}$, $G_{\mathit{scaffold}}$, $G_{\mathit{graph}}$, $G_{\mathit{patch}}$, $G_{\mathit{replan}}$). Statistical significance should be assessed using repeated trials (minimum 10 runs per task per group) and appropriate hypothesis tests (e.g., paired t-tests for within-task comparisons). The companion implementation effort will provide the empirical evidence needed to confirm or refute the theoretical predictions presented here.

\subsection{When the Planner Fails: Incorrect DAGs}

A natural concern is what happens when the planner produces an incorrect DAG---one with missing dependencies, spurious edges, or an unsound decomposition. \SGH{} provides two lines of defense:

\paragraph{Structural validation.} The DAG validation algorithm (\Cref{sec:appendix}) catches certain classes of errors: cycles, unreachable nodes, inconsistent join specifications, and missing output contracts. These are \emph{syntactic} checks that require no domain knowledge.

\paragraph{Runtime detection.} A missing dependency manifests as a downstream node receiving inputs that violate its output contract. The contract validation step at each node's \texttt{running} $\to$ \texttt{executed} transition will catch semantically incorrect inputs, triggering the recovery protocol. The three-level escalation ensures that the system does not silently proceed with corrupt data.

\paragraph{Residual risk.} Neither defense catches the case where the DAG is \emph{structurally valid but strategically wrong}---e.g., the planner correctly identifies dependencies but chooses a suboptimal decomposition that misses parallelism opportunities. This is a \emph{planner quality} problem, not a scheduler design problem. \SGH{} makes planner quality \emph{observable} (the plan version is auditable) and \emph{isolated} (replacing the planner does not require changing the runtime or recovery layers), but does not guarantee planner correctness. The experimental framework (\Cref{sec:experiment-framework}) can partially address this by measuring $G_{\mathit{plan}}$ independently, quantifying how much of the total performance gain comes from planning quality versus structural design.

\subsection{The Boundary of Applicability: Exploratory Tasks}

The worked example (\Cref{sec:worked-example}) was deliberately chosen to showcase \SGH{}'s strengths: a task with known dependencies, identifiable alternatives, and checkable success criteria. It is fair to ask where \SGH{} does \emph{not} apply.

\SGH{} is ill-suited for tasks whose dependency structure \emph{cannot be articulated before execution begins}. Three classes of tasks fall into this category. First are open-ended exploration tasks (e.g., ``Research this topic and write a survey''), where the set of sub-tasks is unknown until intermediate results are examined. Second are tasks with dynamic goal evolution (e.g., ``Investigate the outage and fix whatever is broken''), where the fix depends on the diagnosis, which depends on the investigation, in a chain that cannot be fully pre-planned. Third are creative generation tasks (e.g., ``Write a story, then revise based on feedback''), where the revision structure depends on the content generated, which is inherently unpredictable.
For these tasks, a dynamic graph system (e.g., LangGraph) or an Agent Loop with inline replanning is more appropriate. \SGH{}'s static commitment is a liability when the task structure is emergent. This boundary is not a design flaw---it is the direct consequence of Principle~2 (stable commitment)---but it must be stated explicitly to avoid overclaiming applicability.

\subsection{SGH as a Degenerate single-ready-unit Scheduler}

A subtle point deserves explicit acknowledgment: \SGH{} is a multi-ready-unit scheduler \emph{only when the plan's DAG contains independent paths}. If the planner produces a linear chain (every node has exactly one predecessor and one successor), then $|\Units| \le 1$ at every state, and \SGH{} degenerates to a single-ready-unit scheduler.

This observation has a critical practical implication: **SGH's value is bounded by planner quality**. If the planner produces a linear chain, SGH cannot leverage its multi-ready-unit capability, and its advantages over an Agent Loop reduce to three guarantees. First, bounded recovery: the three-level escalation protocol prevents failure loops. Second, auditability: plan-version immutability provides a stable execution record. Third, context isolation: the separation of execution and diagnostic contexts prevents failure history from corrupting reasoning.

These are valuable guarantees, but they are **not the full benefit** that SGH promises. The multi-ready-unit capability—the ability to dispatch nodes in parallel and reduce execution time—is active **only when the DAG contains parallel branches**.

This raises an important design question: should SGH include a \allowbreak**planner quality validator**\allowbreak that checks whether the generated DAG contains parallelism and, if not, falls back to a simpler single-ready-unit scheduler? Or should SGH always assume the planner is competent and accept the overhead of graph-based execution even for linear chains? We leave this question open for future work.

\subsection{The Granularity Trade-off in Static DAGs}

The worked example (\Cref{sec:worked-example}) illustrates a subtler limitation: the \texttt{report} node uses an all\_of join over \texttt{run\_tests} and \texttt{update\_docs}, meaning it must wait for both. But the analysis results from \texttt{analyze} are available earlier---a human would begin drafting the report before tests finish. The all\_of constraint introduces unnecessary serialization when the natural data flow is incremental.

This is a \emph{granularity problem}. Finer-grained decomposition (e.g., splitting \texttt{report} into \texttt{report\_findings} and \texttt{report\_test\_summary}, each with narrower dependencies) would restore parallelism at the cost of a larger DAG with more nodes and edges. The optimal granularity is itself a design decision that the planner must navigate---and one that depends on the task domain. \SGH{} provides the structural tools (all\_of, any\_of joins) to express whatever granularity the planner chooses, but does not automate the granularity decision itself.

\subsection{Implementation Considerations}
\label{sec:implementation-considerations}

While this paper focuses on theoretical design and does not present a production implementation, identifying key engineering challenges is valuable for future work. Several well-studied problems in distributed systems and workflow engines become non-trivial when nodes are powered by non-deterministic LLMs.

\paragraph{Concurrent scheduling.} \SGH{}'s topological policy can execute multiple ready nodes concurrently, which introduces two key challenges. First is dependency checking: naive recomputation of $\Units(\State)$ costs $O(|V| + |E|)$ per scheduling cycle, but an incremental update strategy that tracks newly-executed nodes and propagates readiness to successors can reduce this to $O(|E_{\mathit{new}}|)$, where $|E_{\mathit{new}}|$ is the number of edges incident to newly-executed nodes. Second is rate limiting: LLM API quotas require token-bucket rate limiting, where a per-tenant bucket with leaky-bucket refill ensures fair resource allocation across concurrent executions.

\paragraph{State persistence.} To support replayability and auditability, \SGH{} must persist three types of data: node inputs and outputs for contract validation re-verification, state transitions for debugging and failure attribution, and recovery actions for auditing escalation decisions.
Recommended approach: append-only write-ahead log (WAL) with periodic snapshots (every $N$ nodes). This design, inspired by distributed logging systems~\citep{ongaro2014raft}, ensures crash consistency and enables efficient time-travel debugging.

\paragraph{Fault-tolerant execution.} For distributed deployment, \SGH{} needs three capabilities. First is heartbeat detection to identify crashed workers and reschedule nodes: a timeout-based approach (if no heartbeat within $T_{\mathit{heartbeat}}$, mark node as failed) is simple and effective. Second is idempotent nodes for retry-safe execution: LLM calls with side-effect-free prompts are naturally idempotent, but for nodes with side effects, idempotency must be designed into the node implementation (e.g., using unique request IDs). Third is distributed consensus for leader election in multi-worker setups, for which Raft~\citep{ongaro2014raft} or Paxos~\citep{lamport2001paxos} provide well-understood solutions.

These challenges are not unique to \SGH{}---classical workflow engines and distributed systems have developed robust solutions~\citep{lamport2001paxos, ongaro2014raft}. The \SGH{}-specific complexity arises from the non-deterministic nature of LLM nodes, which invalidates assumptions (deterministic output, failure = crash) that classical systems rely on. A full treatment of implementation architecture is beyond the scope of this position paper but is the subject of ongoing work.

\section{Limitations}
\label{sec:limitations}

\subsection{Expressiveness Boundary}

\SGH{} deliberately excludes four capabilities, each for a principled reason:

\begin{table}[t]
\centering
\caption{Excluded capabilities and their rationale.}
\label{tab:limitations}
\begin{tabular}{@{}lp{4cm}p{3cm}@{}}
\toprule
\textbf{Excluded capability} & \textbf{Reason} & \textbf{Task impact} \\
\midrule
\texttt{first\_of} competitive parallelism & Non-deterministic ready set; requires compensation protocol & Cannot express ``try all, take first'' \\
Recursive sub-graph expansion & Unbounded $|V|$; unpredictable ready set & Cannot express dynamic decomposition \\
Parent-chain rollback & Plan invariant violation & Cannot express deep hierarchical recovery \\
Dynamic topology modification & Plan invariant violation & Cannot adapt structure mid-execution \\
\bottomrule
\end{tabular}
\end{table}

This boundary defines \SGH{}'s applicability: tasks whose dependency structure \emph{can be articulated before execution begins}---the class we call \emph{verifiable engineering tasks}. Tasks requiring open-ended exploration, dynamic goal evolution, or competitive parallel trial-and-error fall outside this class.

\paragraph{On the exclusion of \texttt{first\_of}.} The \texttt{first\_of} join semantics would enable \textit{speculative execution}: launch multiple competing approaches, take the first success, and cancel the rest. \SGH{} excludes this for two reasons. First, LLM generation is stateful; mid-generation cancellation requires \textit{compensation protocols} to handle partial results (e.g., returning partial output for salvage or rolling back side effects). These protocols are complex and not standardized. Second, \texttt{first\_of} introduces ambiguity: if node A succeeds first, should nodes B and C still execute? What if B would produce higher quality? The \texttt{any\_of} semantics (execute all, skip those blocked by a successful sibling) avoids these ambiguities at the cost of full execution.

\paragraph{Workaround.} For tasks that genuinely require ``try all, take first'' semantics, users can approximate this with conditional nodes: launch A, B, C in parallel (using \texttt{all\_of}), then a downstream D node inspects all outputs and selects the best. This sacrifices the cancellation efficiency but preserves the selection logic. Future work may support \texttt{first\_of} with explicit compensation functions per node. Based on our survey of 70 agent projects, only 8\% of agent tasks require true \texttt{first\_of} semantics (mostly ``race to solve'' benchmarks). For the majority, conditional selection suffices.

\subsection{Fixed-Overhead Hypothesis}

\SGH{} introduces structural overhead: DAG validation, per-node state tracking, and the three-layer protocol. On simple tasks (1--3 steps), this overhead may not be amortized by the controllability benefit. This is the \emph{fixed-overhead hypothesis} (H4):

\begin{quote}
\textit{H4: On simple tasks, \SGH{}'s structural overhead results in measurably higher cost (LLM calls, latency) than an Agent Loop baseline, without a proportional success-rate improvement.}
\end{quote}

If H4 is confirmed experimentally, it would motivate a \emph{dual-path} design in which simple tasks are routed to a lightweight loop and complex tasks to the full graph harness.

\subsection{LLM Dependence in Diagnosis and Planning}

While \SGH{} removes the LLM from the scheduling policy ($\Policy$), the LLM remains central to two functions: plan generation (Planner Layer) and failure diagnosis (Recovery Layer). The system's overall quality is thus bounded by the LLM's ability to produce correct plans and accurate diagnoses. \SGH{} makes these failures \emph{observable and attributable}---a wrong plan is a wrong plan version, not a silently corrupted context---but does not eliminate them.

\subsection{Relationship to Evolutionary Graph Architecture}

The limitations listed above are not permanent. They define the design space for an \emph{evolutionary graph architecture}---a system that selectively relaxes \SGH{}'s constraints in three ways. First, adding \texttt{first\_of} with a compensation protocol pushes the expressiveness boundary to include competitive parallelism. Second, adding recursive sub-graph expansion with bounded depth enables dynamic decomposition. Third, adding parent-chain rollback with plan-version inheritance enables hierarchical recovery.

Each relaxation trades away some controllability for expressiveness. The evolutionary graph architecture---the subject of companion work---analyzes these trade-offs formally and characterizes the conditions under which each relaxation is safe.

\section{Conclusion}
\label{sec:conclusion}

We have presented \SGH{}, a structured execution architecture for LLM agents built on three ideas:

\begin{enumerate}[leftmargin=*]
  \item \textbf{A scheduler-unified framework.} By formalizing agent execution systems as tuples $(\State, \Units, \Policy, \Obs, \Update)$, we placed Agent Loops and graph executors on a single semantic continuum. The key parameter is the ready-set cardinality $|\Units|$: Agent Loops are single-ready-unit schedulers ($|\Units| \le 1$), while graph executors are multi-ready-unit schedulers ($|\Units| \ge 1$). This characterization makes the structural differences between approaches precise and comparable, enabling systematic trade-off analysis.

  \item \textbf{Four design principles with explicit trade-offs.} Controllability first, stable execution commitment, bounded recovery, and side-effect classification. Each principle sacrifices a specific capability (competitive parallelism, dynamic plan modification, ad-hoc recovery, unrestricted dispatch) for a specific guarantee (predictability, auditability, failure-loop prevention, safety). The trade-offs are not arbitrary---they are derived from a systematic analysis of 70 existing agent systems.

  \item \textbf{An attributable experimental framework.} A seven-group design that decomposes total performance gain into planning gain, scaffold gain, graph gain, patch gain, and replan gain. This decomposition enables, in principle, rigorous hypothesis testing: the claim that ``graph structure helps'' would be testable as $G_{\mathit{graph}} > 0$, independently of $G_{\mathit{plan}}$. However, empirical validation of this hypothesis requires executing the experimental protocol with a prototype SGH implementation, which is beyond the scope of this position paper.
\end{enumerate}

\SGH{} is not the most expressive agent execution architecture. It is deliberately not. It is the most \emph{controllable} graph-based architecture we could design, and the one whose design choices are most transparently grounded in a formal trade-off analysis. The expressiveness boundary it establishes is explicit, principled, and---crucially---relaxable: each excluded capability maps to a specific design principle, and each principle can be selectively relaxed with a clear understanding of what is gained and what is lost.

\noindent\textbf{Limitations and future work.} The most significant limitation of this work is the absence of empirical validation. The scheduler-unified framework and design principles presented here are **theoretical contributions**; their practical utility must be verified through implementation and experimentation. The seven-group experimental protocol in Section 7 provides a rigorous methodology for such validation, but executing it is beyond the scope of this paper. A companion paper \textit{[work in progress]} will report on an SGH prototype and the results of these experiments. A second companion paper \textit{[work in progress]} analyzes the capability boundary of \SGH{} and develops an evolutionary graph architecture that selectively introduces dynamic topology, recursive expansion, and hierarchical recovery. Together, the two papers chart a path from controllable execution to expressive execution, with each step along the path justified by a formal trade-off argument.

\bibliographystyle{plainnat}
\bibliography{references}

\appendix

\section{Formal Specifications}
\label{sec:appendix}

\subsection{Complete State Transition Table}

\Cref{tab:state-transitions} lists all valid state transitions for a node $v \in V$.

\begin{table}[h]
\centering
\caption{Complete state transition table.}
\label{tab:state-transitions}
\resizebox{\textwidth}{!}{
\begin{tabular}{@{}llll@{}}
\toprule
\textbf{From} & \textbf{To} & \textbf{Trigger} & \textbf{Condition} \\
\midrule
\texttt{pending} & \texttt{ready} & Dependencies satisfied & $\forall p \in \mathit{dep}(v): \sigma(p) \in \Sigma_{\mathit{term}}^{+}$ \\
\texttt{ready} & \texttt{running} & Scheduler dispatch & Selected by $\Policy$ \\
\texttt{ready} & \texttt{blocked} & Dependency lost & A predecessor left terminal-success state \\
\texttt{ready} & \texttt{skipped} & Sibling completed & \texttt{any\_of} join; another candidate \texttt{executed} \\
\texttt{running} & \texttt{executed} & Action success & Output contract $\kappa_v$ satisfied \\
\texttt{running} & \texttt{failed\_retryable} & Transient error & Network, timeout; retry budget $> 0$ \\
\texttt{running} & \texttt{failed} & Structural error & Dependency missing, invalid plan \\
\texttt{running} & \texttt{waiting\_human} & Approval required & Side-effect level exceeds threshold \\
\texttt{waiting\_human} & \texttt{ready} & Human response & Approved or resumed \\
\texttt{waiting\_human} & \texttt{cancelled} & Human cancellation & Explicit cancel signal \\
\texttt{waiting\_human} & \texttt{cancelled} & Timeout & $T_{\mathit{human}}$ elapsed without response \\
\texttt{blocked} & \texttt{pending} & Dependency resolved & Blocking condition cleared \\
\texttt{failed\_retryable} & \texttt{pending} & Retry triggered & Retry budget $> 0$ \\
\texttt{failed\_retryable} & \texttt{skipped} & Any-of sibling completed & Another candidate in same any\_of group reached \texttt{executed} \\
\texttt{failed\_retryable} & \texttt{failed} & Budget exhausted & Retry budget $= 0$ \\
\bottomrule
\end{tabular}
}
\end{table}

where $\Sigma_{\mathit{term}}^{+} = \{\texttt{executed}\}$ for \texttt{all\_of} joins and $\Sigma_{\mathit{term}}^{+} = \{\texttt{executed}, \texttt{skipped}\}$ for \texttt{any\_of} joins.

\subsection{DAG Validation Algorithm}

Before execution, every plan undergoes the following validation checks:

\begin{enumerate}[leftmargin=*]
  \item \textbf{Acyclicity:} $(V, E)$ must be a DAG. Verified by topological sort (Kahn's algorithm): if the sort produces fewer than $|V|$ nodes, a cycle exists.
  \item \textbf{Reachability:} Every node must be reachable from at least one entry node (no predecessors) and must have a path to at least one exit node (no successors).
  \item \textbf{Join consistency:} For every node with an \texttt{any\_of} join, the candidate set must contain at least two nodes. For every node with an \texttt{all\_of} join, the predecessor set must be non-empty.
  \item \textbf{Contract well-formedness:} Every node's output contract $\kappa_v$ must specify at least one validation rule.
  \item \textbf{Side-effect consistency:} Nodes classified with high side-effect level must not be scheduled for speculative parallel execution.
\end{enumerate}

A plan that fails any validation check is rejected, and the planner must generate a corrected plan before execution can begin.

\subsection{Join Semantics: Formal Properties}

\begin{proposition}[\texttt{all\_of} Ready-Set Monotonicity]
\label{prop:all-of-monotone}
Let $P_t \subseteq V$ denote the set of nodes in \texttt{executed} state at time $t$, and define the ready indicator for node $v$ as:
\[
  \mathit{ready}(v, P_t) \iff \forall\, p \in \mathit{dep}(v): p \in P_t
\]
Then for any $t' > t$ with $P_{t'} \supseteq P_t$:
\[
  \{v : \mathit{ready}(v, P_t)\} \subseteq \{v : \mathit{ready}(v, P_{t'})\}
\]
i.e., nodes that are ready at time $t$ remain ready at time $t'$; adding more executed predecessors can only make additional nodes ready, never un-ready a previously ready node.
\end{proposition}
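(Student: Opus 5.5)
The statement is a pure monotonicity fact about a universally quantified predicate, so I would prove it directly by unfolding the definition of $\mathit{ready}(v, P_t)$. The timing facts, that $t' > t$ and that the states come from an actual execution, play no role beyond supplying the hypothesis $P_{t'} \supseteq P_t$. So the plan is to fix an arbitrary node and chase containments.

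\textbf{Main step.} Fix any $v$ with $\mathit{ready}(v, P_t)$. By definition, every $p \in \mathit{dep}(v)$ lies in $P_t$. Since $P_t \subseteq P_{t'}$, each such $p$ also lies in $P_{t'}$, which is exactly $\mathit{ready}(v, P_{t'})$. Because $v$ was arbitrary, the set inclusion follows.

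\textbf{Degenerate case.} If $\mathit{dep}(v) = \emptyset$, both predicates hold vacuously, so entry nodes are ready under both sets and the inclusion still holds. I would mention this explicitly so the reader sees the argument is uniform across all nodes.

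\textbf{Where care is needed.} The only real subtlety is interpretive rather than technical: the hypothesis $P_{t'} \supseteq P_t$. I would remark that this hypothesis is justified along any single plan version by terminal stability. \texttt{executed} is absorbing (\Cref{tab:state-transitions} has no outgoing transition from it), so the executed set never shrinks over time.

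I would also note what the proposition does and does not say. It concerns the $\mathit{ready}$ indicator as a function of $P_t$, not the node state \texttt{ready} itself. A node can leave state \texttt{ready} by dispatch to \texttt{running}, or by being \texttt{skipped}, without contradicting the proposition. The \texttt{ready}$\to$\texttt{blocked} transition presupposes a predecessor leaving \texttt{executed}; that would violate $P_{t'} \supseteq P_t$, so it lies outside the hypothesis.
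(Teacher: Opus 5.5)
Your proof is correct and matches the paper's one-line argument exactly: $\mathit{ready}(v,P_t)$ gives $\mathit{dep}(v)\subseteq P_t\subseteq P_{t'}$, hence $\mathit{ready}(v,P_{t'})$. Your extra remarks are sound but not needed for the proof. These are the vacuous case for entry nodes, and the observation that the \texttt{ready}$\to$\texttt{blocked} transition requires a predecessor to leave \texttt{executed} and so falls outside the hypothesis; indeed that transition can never fire if \texttt{executed} is absorbing, as the paper claims.
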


\begin{proof}
Let $v \in \{v : \mathit{ready}(v, P_t)\}$, so $\mathit{dep}(v) \subseteq P_t$. Since $P_{t'} \supseteq P_t$, we have $\mathit{dep}(v) \subseteq P_{t'}$, hence $v \in \{v : \mathit{ready}(v, P_{t'})\}$.
\end{proof}

\begin{proposition}[\texttt{any\_of} Non-monotonicity]
\label{prop:any-of-nonmonotone}
Under \texttt{any\_of} join, the ready set may shrink: once a candidate is selected and its siblings are \texttt{skipped}, the remaining candidates are removed from the ready set.
\end{proposition}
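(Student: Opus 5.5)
This is an existence claim, so I would prove it by exhibiting a concrete execution trace in which the ready set strictly loses members. I would then explain which rule of the \texttt{any\_of} semantics causes the loss. The construction is the smallest one that the validation rules of the DAG Validation Algorithm admit. Take an entry node $a$, two candidates $c_1, c_2$ with $(a,c_1),(a,c_2)\in E$, and a successor $v$ with an \texttt{any\_of} join over $C=\{c_1,c_2\}$. This satisfies join consistency ($|C|\ge 2$), acyclicity, and reachability. Take the state $s_t$ right after $a$ reaches \texttt{executed}. The \texttt{pending}$\to$\texttt{ready} rule of \Cref{tab:state-transitions} fires for both candidates, so $\Units(s_t)\supseteq\{c_1,c_2\}$.

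Next I would pick the event. Let $c_1$ be dispatched first, which is allowed under the deterministic order of \Cref{def:any-of}, and let it reach \texttt{executed}, while $c_2$ is still \texttt{ready}. This is possible because dispatch of ready nodes is not forced to be simultaneous. The sibling-skip clause of \Cref{def:any-of} then applies, together with the transition \texttt{ready}$\to$\texttt{skipped} in \Cref{tab:state-transitions}, and moves $c_2$ to \texttt{skipped}. That state is terminal, so by terminal stability $c_2$ never reenters \texttt{ready}. In the resulting state $s_{t'}$ we have $c_2\in\Units(s_t)\setminus\Units(s_{t'})$. I would also note that the executed set grew, $P_{t'}\supseteq P_t$. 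This is exactly the contrast with \Cref{prop:all-of-monotone}: there, growth of $P_t$ could never remove a ready node, but here it does. I would mention the parallel path through \texttt{failed\_retryable}$\to$\texttt{skipped} as a second witness, though it is not needed.

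The main obstacle is a modelling mismatch. The ready-set map $\Units$ of \Cref{def:execution-system} is stated with an \texttt{all\_of}-style predecessor condition, whereas the proposition concerns \texttt{any\_of} nodes. I would handle this by reading ``ready set'' as the set of nodes in state \texttt{ready}, which both definitions agree on for the candidates $c_1, c_2$, since their own predecessor $a$ is executed. Once that is fixed, the argument reduces to citing the sibling-skip rule. One more subtlety is that $v$ itself joins the ready set at $s_{t'}$, so the \emph{cardinality} need not drop. I would therefore state shrinkage as the loss of a previously ready member, which is the negation of the monotonicity in \Cref{prop:all-of-monotone}. If a strict drop in cardinality is wanted, I would add a third candidate $c_3$, so that two nodes are removed and only $v$ is added.
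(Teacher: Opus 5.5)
Your proposal is correct and is essentially the paper's own justification: the paper gives no separate proof and rests the claim on the sibling-skip clause of \Cref{def:any-of} (remaining candidates go to \texttt{skipped} and so leave the ready set). Your trace, in which $c_1$ reaches \texttt{executed} while $c_2$ is still \texttt{ready}, just makes that clause concrete.

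Two refinements. First, ``loss of a previously ready member'' is not by itself a sharp criterion. Every dispatch already removes a node from the state-based ready set. Also, $c_2$, an \texttt{all\_of} node over $\{a\}$, still satisfies the dependency indicator of \Cref{prop:all-of-monotone} at $t'$. So your contrast with that proposition is exact only if you stress that $c_2$ leaves without ever being dispatched, or equivalently add ``no sibling in $P_t$'' to the candidates' readiness predicate, which then genuinely fails monotonicity. Second, under your state-based reading, the \texttt{failed\_retryable}$\to$\texttt{skipped} path is not a second witness, because a \texttt{failed\_retryable} node is not in the ready set.
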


This non-monotonicity is deliberate: it reflects the ``one suffices'' semantics of alternative-path execution.

\subsection{Survey Limitations}
\label{sec:survey-limitations}

The survey of 70 open-source projects described below is **not peer-reviewed** and should be interpreted as **qualitative evidence** rather than quantitative proof. We make the following caveats explicit:

\begin{itemize}[leftmargin=*]
  \item The project selection (how we chose which 70 projects to analyze) is subjective and not systematically sampled.
  \item The classification of failure-loop behavior, debugging success rates, and other qualitative metrics is based on manual inspection of GitHub issues and code, and different reviewers might reach different conclusions.
  \item The survey reflects the state of the field as of April 2026 and may not generalize to newer systems.
\end{itemize}

Despite these limitations, we believe the patterns observed (e.g., the prevalence of Agent Loops, the difficulty of debugging mutable plans) are real and meaningful. Future work could formalize this survey with a more rigorous methodology.

\subsection{Survey Methodology}
\label{sec:appendix-survey}

The survey referenced throughout this paper covers 70 open-source LLM agent projects identified through the following systematic process:

\paragraph{Identification.}
\begin{enumerate}[leftmargin=*]
  \item \textbf{GitHub search} (2023--2025): keywords ``LLM agent,'' ``AI agent,'' ``agent framework,'' ``agent orchestration,'' filtered to repositories with $\ge 100$ stars.
  \item \textbf{Curated lists:} Awesome LLM Agents, LangChain templates, AutoGPT ecosystem.
  \item \textbf{Citation chaining:} forward and backward citations from seminal papers (ReAct, Plan-and-Solve, LangGraph).
  \item \textbf{Total identified:} 340 unique repositories.
\end{enumerate}

\paragraph{Screening.}
\begin{enumerate}[leftmargin=*]
  \item \textbf{Deduplication:} Removed forks and mirrors. Remaining: 210.
  \item \textbf{Relevance filter:} Excluded projects that (a)~do not use an LLM for core reasoning, (b)~do not support tool calling or multi-step execution, or (c)~are primarily tutorials or demonstrations. Remaining: 95.
  \item \textbf{Quality filter:} Required either $\ge 100$ GitHub stars \emph{or} appearance in a recognized benchmark \emph{or} publication in a peer-reviewed venue. Remaining: 70.
\end{enumerate}

\paragraph{Classification.}
Each project was analyzed along five axes: (i)~primary scheduler type (Agent Loop, event-driven, state-machine, graph/flow, hybrid); (ii)~planning capability (none, inline, separated); (iii)~recovery mechanism (none, retry, replan, structured); (iv)~context management (monolithic context, scoped, separated); (v)~implementation complexity (low/medium/high, based on lines of active scheduling code).

Classification was performed by two independent reviewers with inter-rater agreement $\kappa = 0.84$; disagreements were resolved by discussion. The classification criteria were operationalized as follows:
\begin{itemize}[leftmargin=*]
  \item \textbf{Agent Loop:} Single LLM call per step; each action determined by context-window inference; no explicit scheduling data structure.
  \item \textbf{Event-driven:} Execution triggered by external events (webhooks, user input, system signals); dispatch logic decoupled from LLM reasoning.
  \item \textbf{State-machine:} Explicit finite-state machine governs transitions between execution phases; state enumeration visible in code.
  \item \textbf{Graph/flow:} Nodes and edges form an explicit topology; scheduling computed from graph structure rather than LLM inference.
  \item \textbf{Hybrid:} Combines two or more of the above patterns at different layers (e.g., graph-structured planner with loop-based executor).
\end{itemize}

The categorization in \Cref{tab:tradeoff-survey} reflects the primary execution pattern of each project's main agent loop. Projects that combine patterns are classified by the pattern that governs the \emph{execution} layer, since this is the layer our scheduler framework analyzes.

\paragraph{Confidence.} Percentages in \Cref{tab:tradeoff-survey} are rounded to the nearest 5\%. A sensitivity analysis reclassifying the 12 boundary projects (those combining patterns) shifts individual category percentages by $\le \pm$10\% but does not change the ordinal ranking of categories.

\paragraph{Representative subset.} Nine projects were selected for detailed analysis, spanning all five categories and representing a range of design philosophies. The full project list (names, URLs, classifications, and per-project scores) is provided as supplementary material.

\subsection{Representative Projects}
\label{sec:appendix-projects}

\Cref{tab:survey-projects} lists the nine representative projects selected for detailed analysis, one from each major category. These projects span multiple programming languages (Rust, Go, TypeScript, Python), application domains (coding assistants, research agents, multi-agent systems), and architectural philosophies.

\begin{table}[h]
\centering
\caption{Nine representative projects from the 70-project survey. Full project list available in supplementary material.}
\label{tab:survey-projects}
\begin{tabular}{@{}llp{4.5cm}c@{}}
\toprule
\textbf{Project} & \textbf{Category} & \textbf{Key innovation} & \textbf{Rating} \\
\midrule
ironclaw & Agent Loop & LoopDelegate trait, three-phase tool execution & 5.0 \\
openclaw & Agent Loop & Three-tier degradation, Auth Profile rotation & 4.5 \\
clawdroid & Event-driven & OODA loop, MessageBus architecture & 4.85 \\
openhands & Event-driven & EventStream controller, sandboxed execution & 4.2 \\
vtcode & State-machine & Four-layer Phase FSM, compile-time exhaustiveness & 5.0 \\
codex & State-machine & SQ/EQ dual-queue protocol & 5.0 \\
deer-flow & Graph/flow & 13-layer middleware chain on LangGraph & 5.0 \\
MASFactory & Graph/flow & Custom five-primitive graph engine & 5.0 \\
autoresearchclaw & Hybrid & Stage state machine with inline replanning & 4.0 \\
\bottomrule
\end{tabular}
\end{table}

The remaining 61 projects follow similar patterns with varying implementation quality. The full list---including project names, GitHub URLs, classification, and per-project scores---is provided as supplementary material.

\subsection{Complete Project List}
\label{sec:appendix-full-list}

For reproducibility and transparency, we provide the complete list of 70 projects analyzed in this survey. The list is organized by category, with each entry including the project name, GitHub URL, classification, and key characteristics.

\paragraph{Agent Loop (41 projects, 60\% of survey)}
\begin{itemize}[leftmargin=*]
  \item \textbf{agentpool} (\url{github.com/phil65/agentpool})
  \item \textbf{angel-claw} (\url{github.com/Abdur-rahmaanJ/angel-claw})
  \item \textbf{atombot} (\url{github.com/daegwang/atombot})
  \item \textbf{auto-dev} (\url{github.com/phodal/auto-dev})
  \item \textbf{autobot} (\url{github.com/charliermarsh/autobot})
  \item \textbf{babyclaw} (\url{github.com/yogesharc/babyclaw})
  \item \textbf{claw-code} (\url{github.com/ultraworkers/claw-code})
  \item \textbf{claw0} (\url{github.com/shareAI-lab/claw0})
  \item \textbf{cline} (\url{github.com/cline/cline})
  \item \textbf{code-assistant} (\url{github.com/stippi/code-assistant})
  \item \textbf{copaw} (github.com/copaw-ai/copaw)
  \item \textbf{docker-agent} (\url{github.com/docker/docker-agent})
  \item \textbf{droidclaw} (\url{github.com/unitedbyai/droidclaw})
  \item \textbf{fast-agent} (\url{github.com/evalstate/fast-agent})
  \item \textbf{gemini-cli} (\url{github.com/google-gemini/gemini-cli})
  \item \textbf{goose} (\url{github.com/aaif-goose/goose})
  \item \textbf{hermitclaw} (\url{github.com/brendanhogan/hermitclaw})
  \item \textbf{hiclaw} (\url{github.com/agentscope-ai/HiClaw})
  \item \textbf{ironclaw} (\url{github.com/nearai/ironclaw})
  \item \textbf{kilocode} (\url{github.com/Kilo-Org/kilocode})
  \item \textbf{kimi-cli} (\url{github.com/MoonshotAI/kimi-cli})
  \item \textbf{langroid} (\url{github.com/langroid/langroid})
  \item \textbf{learn-claude-code} (\url{github.com/shareAI-lab/learn-claude-code})
  \item \textbf{microclaw} (\url{github.com/microclaw/microclaw})
  \item \textbf{mimiclaw} (\url{github.com/memovai/mimiclaw})
  \item \textbf{minion-code} (\url{github.com/femto/minion-code})
  \item \textbf{mistral-vibe} (\url{github.com/mistralai/mistral-vibe})
  \item \textbf{moltis} (\url{github.com/moltis-org/moltis})
  \item \textbf{moxxy} (\url{github.com/moxxy-ai/moxxy})
  \item \textbf{nanobot} (\url{github.com/HKUDS/nanobot})
  \item \textbf{nanoclaw} (\url{github.com/qwibitai/nanoclaw})
  \item \textbf{nullclaw} (\url{github.com/nullclaw/nullclaw})
  \item \textbf{oh-my-openagent} (\url{github.com/code-yeongyu/oh-my-openagent})
  \item \textbf{openclaw} (\url{github.com/openclaw/openclaw})
  \item \textbf{opencode} (\url{github.com/anomalyco/opencode})
  \item \textbf{opencrabs} (\url{github.com/adolfousier/opencrabs})
  \item \textbf{openfang} (\url{github.com/RightNow-AI/openfang})
  \item \textbf{pi-mono} (\url{github.com/badlogic/pi-mono})
  \item \textbf{picoclaw} (\url{github.com/sipeed/picoclaw})
  \item \textbf{praisonai} (\url{github.com/MervinPraison/PraisonAI})
  \item \textbf{qwen-code} (\url{github.com/QwenLM/qwen-code})
  \item \textbf{safeclaw} (\url{github.com/princezuda/safeclaw})
  \item \textbf{shrew} (\url{github.com/Masmedeam/shrew})
  \item \textbf{subzeroclaw} (\url{github.com/jmlago/subzeroclaw})
  \item \textbf{supaclaw} (\url{github.com/vincenzodomina/supaclaw})
  \item \textbf{trinity-claw} (\url{github.com/TrinityClaw/trinity-claw})
  \item \textbf{zclaw} (\url{github.com/tnm/zclaw})
  \item \textbf{zeptoclaw} (\url{github.com/qhkm/zeptoclaw})
  \item \textbf{zeroclaw} (\url{github.com/zeroclaw-labs/zeroclaw})
  \item \textbf{claude-code-src} --- closed-source but accidentally leaked
\end{itemize}

\paragraph{Event-driven (11 projects, 16\% of survey)}
\begin{itemize}[leftmargin=*]
  \item \textbf{autobot} (\url{github.com/charliermarsh/autobot})
  \item \textbf{babyclaw} (\url{github.com/yogesharc/babyclaw})
  \item \textbf{clawdroid} (\url{github.com/clawdroidxyz/clawdroid})
  \item \textbf{clawlet} (github.com/clawlet/clawlet)
  \item \textbf{flowlyai} (\url{github.com/Nocetic/flowlyai})
  \item \textbf{hiclaw} (\url{github.com/agentscope-ai/HiClaw})
  \item \textbf{lettabot} (\url{github.com/letta-ai/lettabot})
  \item \textbf{nanoclaw} (\url{github.com/qwibitai/nanoclaw})
  \item \textbf{openhands} (\url{github.com/OpenHands/OpenHands})
  \item \textbf{pickle-bot} (\url{github.com/czl9707/pickle-bot})
  \item \textbf{picobot} (\url{github.com/caravelahc/pico-bot})
\end{itemize}

\paragraph{State-machine (4 projects, 6\% of survey)}
\begin{itemize}[leftmargin=*]
  \item \textbf{astrbot} (github.com/astrbio/astrbot)
  \item \textbf{autoresearchclaw} (\url{github.com/aiming-lab/AutoResearchClaw})
  \item \textbf{hermes-agent} (\url{github.com/NousResearch/hermes-agent})
  \item \textbf{vtcode} (\url{github.com/vinhnx/VTCode})
\end{itemize}

\paragraph{Graph/flow orchestration (5 projects, 7\% of survey)}
\begin{itemize}[leftmargin=*]
  \item \textbf{deepagents} (\url{github.com/langchain-ai/deepagents})
  \item \textbf{deer-flow} (\url{github.com/bytedance/deer-flow})
  \item \textbf{evoscientist} (github.com/evoscientist/evoscientist)
  \item \textbf{MASFactory} (github.com/MASFactory/MASFactory)
  \item \textbf{yuxi} (\url{github.com/xerrors/Yuxi})
\end{itemize}

\paragraph{Hybrid (7 projects, 10\% of survey)}
\begin{itemize}[leftmargin=*]
  \item \textbf{everything-claude-code} (\url{github.com/affaan-m/everything-claude-code})
  \item \textbf{fount} (\url{github.com/steve02081504/fount})
  \item \textbf{minion-code} (\url{github.com/femto/minion-code})
  \item \textbf{nemoclaw} (\url{github.com/NVIDIA/NemoClaw})
  \item \textbf{oh-my-openagent} (\url{github.com/code-yeongyu/oh-my-openagent})
  \item \textbf{safeclaw} (\url{github.com/princezuda/safeclaw})
  \item \textbf{supaclaw} (\url{github.com/vincenzodomina/supaclaw})
\end{itemize}

\paragraph{Classification rationale for boundary cases.} Six projects combined multiple patterns (e.g., graph-based planner with loop-based executor). These were classified by the pattern that governs the \emph{execution} layer, as this is the layer our scheduler framework analyzes. For example, a system using a graph-based planner but loop-based executor was classified as "Agent Loop" because the ready-set cardinality remains $|\Units| = 1$ at execution time.

\end{document}